\def\eqref#1{equation~\ref{#1}}
\def\plaineqref#1{(\ref{#1})}
\def\1{\bm{1}}
\def\vq{{\bm{q}}}
\DeclareMathAlphabet{\mathsfit}{\encodingdefault}{\sfdefault}{m}{sl}
\SetMathAlphabet{\mathsfit}{bold}{\encodingdefault}{\sfdefault}{bx}{n}
\def\sH{{\mathbb{H}}}
\def\sR{{\mathbb{R}}}
\def\cA{{\mathcal{A}}}
\def\cD{{\mathcal{D}}}
\def\cH{{\mathcal{H}}}
\def\cS{{\mathcal{S}}}
\def\cX{{\mathcal{X}}}
\newcommand{\KL}{D_{\mathrm{KL}}}
\DeclareMathOperator*{\argmax}{arg\,max}
\DeclareMathOperator*{\argmin}{arg\,min}
\newcommand{\RR}{\mathbb{R}}
\newcommand{\EE}{\mathbb{E}}
\newtheorem{theorem}{Theorem}
\newtheorem{lemma}{Lemma}
\newtheorem{proposition}{Proposition}
\newcommand{\dataset}{\mathcal{D}}
\title{Iteratively Refined Behavior Regularization for Offline Reinforcement Learning}
\author{
Xiaohan Hu$^{1}$\thanks{Equal Contribution. }~~, 
Yi Ma$^{1}$\footnotemark[1]~~, 
Chenjun Xiao$^{2}$, 
\textbf{Yan Zheng}$^{1}$, 
\textbf{Jianye Hao}$^{1}$ \\
$^1$ College of Intelligence and Computing, Tianjin University, Tianjin, China \\
$^2$University of Alberta, Canada \\
\texttt{\{huxiaohan,mayi,yanzheng,jianye.hao\}@tju.edu.cn, chenjun@ualberta.ca}
}
\begin{document}

\maketitle

\begin{abstract}

One of the fundamental challenges for offline reinforcement learning (RL) is ensuring robustness to data distribution. 
Whether the data originates from a near-optimal policy or not, we anticipate that an algorithm should demonstrate its ability to learn an effective control policy that seamlessly aligns with the inherent distribution of offline data.
Unfortunately, \emph{behavior regularization}, a simple yet effective offline RL algorithm, tends to struggle in this regard. 
In this paper, we propose a new algorithm that substantially enhances behavior-regularization based on \emph{conservative policy iteration}. 
Our key observation is that by iteratively refining the reference policy used for behavior regularization, conservative policy update guarantees gradually improvement, while also implicitly avoiding querying out-of-sample actions to prevent catastrophic learning failures. 
We prove that in the tabular setting this algorithm is capable of learning the optimal policy covered by the offline dataset, commonly referred to as the \emph{in-sample optimal}  policy.  
We then explore several implementation details of the algorithm when function approximations are applied. 
The resulting algorithm is easy to implement, requiring only a few lines of code modification to existing methods. 
Experimental results on the D4RL benchmark indicate that our method  
outperforms previous state-of-the-art baselines in most tasks, clearly demonstrate its superiority over  
behavior regularization. 

\end{abstract}
\section{Introduction}

Reinforcement learning (RL) has achieved considerable success in various decision-making problems, including games~\citep{mnih2015human}, recommendation and advertising \citep{hao2020dynamic}, logistics optimization~\citep{ma2021hierarchical} and  robotics~\citep{mandlekar2020iris}. 
A critical obstacle that hinders a broader application of RL is its trial-and-error learning paradigm. 
For applications such as education, autonomous driving and healthcare, active data collection could be either impractical or dangerous \citep{levine2020offline}.  
Instead of learning actively, a more favorable approach is to employ scalable data-driven learning methods that can utilize existing data and progressively improve as more training data becomes available. 
This motivates  \emph{offline RL}, with its primary objectivge being learning a control policy {solely} from previously collected data.  

Offline datasets frequently offer limited coverage of the state-action space. Directly utilizing standard RL algorithms to such datasets can result in extrapolation errors when bootstrapping from out-of-distribution (OOD) state-actions, consequently causing significant overestimations in value functions. 
To address this, previous works impose various types of constraints to promote pessimism towards accessing OOD state-actions. 
One simple yet effective approach is \emph{behavior regularization}, which penalizes significant deviations from the behavior policy that collects the offline dataset  \citep{kumar2019stabilizing,wu2019behavior,siegel2020keep,brandfonbrener2021offline,fujimoto2021minimalist}. 
For example, TD3+BC optimizes a deterministic policy $\pi$ by $\max_{\pi} \EE_{(s,a)\sim\cD}[v^{\pi}(s) -  \lambda (\pi(s) - a)] $, where $\cD$ is an offline dataset, $\lambda>0$ is a hyper-parameter \citep{fujimoto2021minimalist}.  
This encourages the learned policy to remain conservative and enhances the stability of policy update, preventing the learned policy from taking excessively risky or off-policy actions.

A fundamental challenge in behavior regularization lies in its performance being closely tied to the underlying data distribution.
Previous works suggest that it often yields subpar results 
when applied to datasets originating from suboptimal policies \citep{kostrikov2022offline,xiaosample}.  
To better understand this phenomenon, we investigate how the quality of a behavior policy influences the performance of behavior regularization through utilizing  \emph{percentile behavior cloning}  \citep{chen2021decision}. 
Given an offline dataset, we create three sub-datasets by filtering trajectories representing the top 5\%, median 5\%, and bottom 5\% performance. 
We then leverage these sub-datasets to train three policies—referred to as \emph{top}, \emph{median}, and \emph{bottom}—using behavior cloning. 
We then develop \emph{TD3 with percentile behavior cloning (TD3+\%BC)}, 
which optimizes the policy by
$\max_{\pi} \EE_{s\sim\cD}[v^{\pi}(s) -  \lambda (\pi(s) - \pi_{\%}(s))]$ 
where $\pi_{\%} \in{\{\text{top, median, bottom} \}}$. 
This variant of TD3+BC simply replaces the behavior policy with the percentile cloning policy. 
We refer to $\pi_{\%}$ as the \emph{reference policy}. 
Our main hypothesis is that a better reference policy (e.g. top) can dramatically improve the efficiency of behavior regularization compared to a worse reference policy (e.g. bottom). 
\cref{fig:fig1} provides the results on \emph{hopper-medium-replay}, \emph{hopper-medium-expert}, \emph{walker-medium-replay} and \emph{walker-medium-expert} from the D4RL datasets \citep{fu2020d4rl}.  
A few key observations. 
\emph{First}, 
 the efficacy of behavior regularization is strongly contingent on the reference policy. 
 For example, TD3+top\%BC dominates TD3+bottom\%BC across all benchmarks. 
This confirms our hypothesis. 
\emph{Second}, behavior regularization guarantees improvement. 
No matter using top, median or bottom, 
TD3+\%BC produces a better or similar policy compared to the reference policy, clearly  demonstrating the advantage of behavior regularization over behavior cloning.

Inspired by these findings, we ask: \emph{is it possible to automatically discover good reference policies from the data to increase the efficiency of behavior regularization? } 
This paper attempts to give an affirmative answer to this question. 
Our key observation is that {Conservative Policy Iteration}, an algorithm that has been widely used for online RL \citep{abdolmaleki2018maximum,abbasi2019politex,mei2019principled,geist2019theory}, can also be extended to the offline setting with minimal changes.  
The core concept behind this approach hinges on the iterative refinement of the reference policy utilized for behavior regularization. This iterative process enables the algorithm to implicitly avoid resorting to out-of-sample actions, all while ensuring continuous policy enhancement.
We provide both theoretical and empirical evidence to establish that, for the tabular setting, our approach is capable of learning the optimal policy covered by the offline dataset, commonly referred to as the {in-sample optimal policy} \citep{kostrikov2022offline,xiaosample}.  
We then discuss several implementation details of the algorithm when function approximations are applied. 
Our method can be seamlessly implemented with just a few lines of code modifications built upon TD3+BC \citep{fujimoto2021minimalist}. 
We evaluate our method on the D4RL benchmark \citep{fu2020d4rl}.  
Experiment results show that it outperform previous state-of-the-art methods on the majority of tasks, with both rapid training speed and reduced computational overhead.

\begin{figure}[t]
\centering
\includegraphics[scale=0.48]{./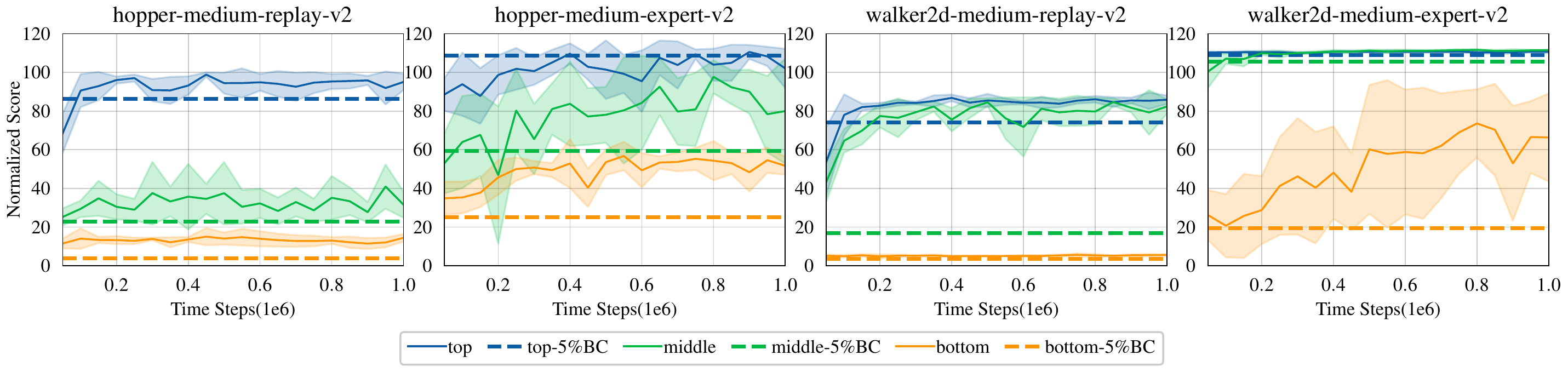}
\caption{The dashed lines indicate the performance of reference policies trained on filtered sub-datasets consisting of top 5\%, median 5\%, and bottom 5\% trajectories. The solid lines indicate the performance of TD3 with different reference policies. It can be concluded that behavior regularization has advantage over behavior cloning and the efficacy of behavior regularization is strongly contingent on the reference policy.}
\label{fig:fig1}
\end{figure}%

\section{Preliminaries}

\subsection{Markov Decision Process}
We consider Markov Decision Process (MDP) determined by $M=\{\cS, \cA, P, r, \gamma\}$ \citep{puterman2014markov}, 
where $\cS$ and $\cA$ represent the state and action spaces. The discount factor
is given by $\gamma\in[0, 1)$, 
$r:\cS\times\cA\rightarrow \sR$ denotes the reward function,  
$P:\cS\times\cA\rightarrow \Delta(\cS)$ defines the transition dynamics \footnote{We use $\Delta(\cX)$ to denote the set of probability distributions over $\cX$ for a finite set $\cX$.}. 
Given a policy $\pi:\cS\rightarrow \Delta(\cA)$, we use $\EE^\pi$ to denote the expectation under the distribution induced by the interconnection of $\pi$ and the environment.  
The \emph{value function} specifies the future discounted total reward obtained by following policy $\pi$, 
\begin{align}
V^\pi(s) = \EE^\pi\left[ \sum_{t=0}^\infty \gamma^t r(s_t, a_t) \Big| s_0 = s\right]\, ,
\end{align}
The \emph{state-action value function}  is defined as
\begin{align}
Q^\pi(s,a) = r(s,a) + \gamma \EE_{s'\sim P(\cdot|s,a)} [V^\pi(s')]\, .
\end{align}
There exists an \emph{optimal policy} $\pi^*$ that maximizes values for all states $s\in\cS$. 
The optimal value functions, $V^*$ and $Q^*$, satisfy the \emph{Bellman optimality equation},
\begin{align}
V^*(s) = \max_a r(s,a) + \gamma \EE_{s'}[V^*(s')]\,, \, \,
Q^{*}(s,a) = r(s,a) + \gamma \EE_{s'} \left[ \max_{a'} Q^{*}(s', a') \right]\, .
\label{eq:bellman-optimality}
\end{align}

\subsection{Offline Reinforcement Learning}

In this work, we consider learning an optimal decision making policy from previously collected offline dataset, denoted as $\dataset = \{s_i, a_i, r_i, s_i'\}^{n-1}_{i=0}$. 
The dataset is generated following this procedure: $s_i\sim \rho, a_i\sim\pi_\dataset, s'_i \sim P(\cdot|s_i,a_i), r_i=r(s_i, a_i)$, 
where $\rho$ represents an unknown probability distribution over states, and $\pi_\dataset$ is an \emph{unknown behavior policy}.  
In offline RL, 
the learning algorithm can only take samples from $\dataset$ without collecting new data through  interactions with the environment.  

Behavior regularization is a simple yet efficient technique for offline RL \citep{kumar2019stabilizing,wu2019behavior}. 
It imposes a constraint on the learned policy to emulate $\pi_\dataset$ according to some distance measure. 
A popular choice is to use the KL-divergence \citep{fujimoto2021minimalist,brandfonbrener2021offline}\footnote{We note that although \citet{fujimoto2021minimalist} applies a behavior cloning term as regularization, this is indeed a KL regularization under Gaussian parameterization with standard deviation.}, 
\begin{align}
\max_{\pi}   \EE_{a\sim \pi } \left[ Q(s, a)  \right] - \tau \KL \left( \pi( s) || \pi_{\cD} ( s) \right)\ \, ,
\label{eq:br}
\end{align}
where $\tau > 0$ is a hyper-parameter. 
Here, $Q$ is some value function. 
Typical choices include the value of $\pi_\dataset$ \citep{brandfonbrener2021offline}, or the value of the learned policy $\pi$ \citep{wu2019behavior,fujimoto2021minimalist}. 
As shown by \cref{fig:fig1}, 
the main limitation of behavior regularization is that it relies on the dataset being generated by
an expert or near-optimal $\pi_\dataset$. When used on datasets derived from more suboptimal policies—typical of those prevalent in real-world applications—these methods do not yield satisfactory results.

\section{Iteratively Refined Behavior Regularization}

In this paper, we introduce a new offline RL algorithm by exploring idea of iteratively improving the reference policy used for behavior regularization. 
Our primary goal is to improve the robustness of existing behavior regularization methods while making minimal changes to existing implementation. 
We first introduce  \emph{conservative policy optimization (CPO)}, a commonly used technique in online RL, then describe how it can  also shed some light on developing offline RL algorithms. 
 
Let $\tau>0$,  the CPO update the policy for any $s\in\cS$  with the following policy optimization problem 
\begin{align}
\max_{\pi}   \EE_{a\sim \pi } \left[ Q^{\bar{\pi}} (s, a)  \right] - \tau \KL \left( \pi( s) || \bar{\pi} ( s) \right)\ \, ,
\label{eq:md}
\end{align}
which generalizes behavior regularization \plaineqref{eq:br} using an arbitrary \emph{reference policy} $\bar{\pi}$. 
The idea is to optimize a policy without moving too far away from the reference policy to increase learning stability. 
As shown in the following proposition, this conservative policy update rule enjoys two intriguing properties: \emph{first}, it guarantees policy improvement over the reference policy $\bar{\pi}$; and \emph{second}, the updated policy still stays on the support of the reference policy. 
These properties also echo our empirical findings presented in \cref{fig:fig1}: as a special case of CPO with $\bar{\pi}=\pi_{\cD}$, behavior regularization is stable in offline learning and always produces a better policy than $\pi_{\cD}$. 

\begin{proposition}
let $\bar{\pi}^*$ be the optimal policy of \plaineqref{eq:md}. For any $s\in\cS$, we have that $V^{\bar{\pi}^*}(s) \geq V^{\bar{\pi}}(s)$ ; and $\bar{\pi}^*(a|s)=0$ given $\bar{\pi}(a|s)=0$. 
\label{prop:cpo}
\end{proposition}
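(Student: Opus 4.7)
The plan is to establish the two claims separately: part one by a policy improvement argument that chains the per-state inequality into a value inequality, and part two by inspecting the closed-form solution (or, more simply, by an infinity argument about the KL term).

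For the support-preservation claim, I would first observe that if $\bar{\pi}(a|s)=0$ for some $a$ yet a candidate $\pi$ places positive mass on that action, then $\KL(\pi(s)\|\bar{\pi}(s))=+\infty$, making the regularized objective equal to $-\infty$. Since setting $\pi=\bar{\pi}$ yields a finite objective value of $\EE_{a\sim\bar{\pi}}[Q^{\bar{\pi}}(s,a)]=V^{\bar{\pi}}(s)$, no maximizer can assign positive mass to such an $a$. Equivalently, one can derive the closed form $\bar{\pi}^*(a|s)\propto \bar{\pi}(a|s)\exp\bigl(Q^{\bar{\pi}}(s,a)/\tau\bigr)$ via Lagrange multipliers on the simplex constraint, which makes the support inclusion immediate.

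For the improvement claim, the key one-line observation is that plugging $\pi=\bar{\pi}$ into the CPO objective gives exactly $V^{\bar{\pi}}(s)$, so by optimality of $\bar{\pi}^*$,
\begin{equation*}
\EE_{a\sim \bar{\pi}^*}\!\bigl[Q^{\bar{\pi}}(s,a)\bigr] - \tau \KL\bigl(\bar{\pi}^*(s)\,\|\,\bar{\pi}(s)\bigr) \;\geq\; V^{\bar{\pi}}(s).
\end{equation*}
Since the KL term is nonnegative, this yields the pointwise inequality $\EE_{a\sim\bar{\pi}^*}[Q^{\bar{\pi}}(s,a)] \geq V^{\bar{\pi}}(s)$ for every $s$. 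From here I would invoke the standard policy improvement lemma: unroll $V^{\bar{\pi}^*}(s) - V^{\bar{\pi}}(s)$ using the Bellman equation for $V^{\bar{\pi}^*}$, substitute the one-step inequality, and iterate (or equivalently apply monotonicity of the Bellman policy operator $T^{\bar{\pi}^*}$, whose unique fixed point is $V^{\bar{\pi}^*}$). The contraction property of $T^{\bar{\pi}^*}$ guarantees convergence and delivers $V^{\bar{\pi}^*}(s) \geq V^{\bar{\pi}}(s)$ for all $s$.

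I do not expect any real obstacle here: the only subtlety is being careful that the one-step improvement inequality is stated with respect to $Q^{\bar{\pi}}$ (not $Q^{\bar{\pi}^*}$), so the transition from the per-state inequality to the value inequality must be done via the standard performance difference / policy improvement unrolling rather than a direct substitution. Once that is set up, both conclusions fall out in a few lines.
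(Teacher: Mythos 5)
Your proposal is correct and follows essentially the same route as the paper: the improvement claim comes from the same two inequalities (optimality of $\bar{\pi}^*$ in the regularized objective plus non-negativity of the KL term, yielding $\EE_{a\sim\bar{\pi}^*}[Q^{\bar{\pi}}(s,a)]\geq V^{\bar{\pi}}(s)$) followed by the standard policy-improvement unrolling, and the support claim from the softmax closed form of the maximizer. Your alternative justification of the support claim---that any policy escaping the support of $\bar{\pi}$ incurs an infinite KL penalty while $\pi=\bar{\pi}$ achieves a finite value---is a slightly more elementary phrasing than the paper's appeal to its Lemma~1, but the substance is the same.
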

\begin{proof}
The proof of this result, as well as other results, are provided in the Appendix. 
\end{proof} 
In summary, the conservative policy update \plaineqref{eq:md} \emph{implicitly guarantees policy improvement constrained on the support of the reference policy $\bar{\pi}$}. 
By extending this key observation in an iteratively manner, we obtain the following \emph{Conservative Policy Iteration (CPI)}  algorithm for offline RL.  
It starts with the behavior policy $\pi_0 = \pi_{\cD}$. 
Then in each iteration $t=0,1,2, \dots$, the following computations are done:
\begin{itemize}
\item \emph{Policy evaluation}: compute $Q^{\pi_t}$ and; 
\item \emph{Policy improvement}: $\forall s\in\cS$,
$
\pi_{t+1} = \argmax_{\pi}   \EE_{a\sim \pi } \left[ Q^{\pi_t} (s, a)  \right] - \tau \KL \left( \pi || {\pi}_t  \right)\, .
$
\end{itemize}
In this approach, the algorithm commences with the behavior policy and proceeds to iteratively refine the reference policy used for behavior regularization. 
Thanks to the conservative policy update, CPI ensures policy improvement while mitigating the risk of querying any OOD actions that could potentially introduce instability to the learning process.

We note that CPO is a special case of  \emph{mirror descent}  in the online learning literature \citep{hazan2016introduction}. 
Extending this technique to sequential decision making has been previously investigated in online RL  \citep{abdolmaleki2018maximum,abbasi2019politex,mei2019principled,geist2019theory}.  
In particular, the \emph{Politex} algorithm considers exactly the same update as \plaineqref{eq:md} for online RL \citep{abbasi2019politex}. 
This algorithm can be viewed as a softened or averaged version of policy iteration. 
Such averaging reduces noise of the value estimator and increases the robustness of policy update.  
Perhaps surprisingly, 
our key contribution is to show this simple yet powerful policy update rule also facilitates offline learning, as it guarantees policy improvement while implicitly avoid querying OOD actions. 

\subsection{Theoretical Analysis}

We now analyze the convergence properties of CPI. 
In particular, we consider the tabular setting with finite state and action space. 
Our analysis reveals that in this setting, CPI converges to the optimal policy that are well-covered by the dataset, commonly referred to as the in-sample optimal policy. 
Consider the \emph{in-sample Bellman optimality equation} \citep{fujimoto2019off}
\begin{align}
V_{\pi_\dataset}^{*}(s) = \max_{a: \pi_{\cD}(a|s) > 0} \Big\{ r(s,a) + \gamma \EE_{s'\sim P(\cdot|s,a)} \left[  V_{\pi_\dataset}^{*}(s') \right] \Big\} \, .
\label{eq:in-sample-bellman-optimality}
\end{align}
This equation explicitly avoids bootstrapping from OOD actions while still guaranteeing optimality for transitions that are well-supported by the data.  
A fundamental challenge lies in the development of scalable algorithms for its resolution \citep{fujimoto2019off,kostrikov2022offline,xiaosample}. 
Our next result shows that CPI provides a simple yet effective solution. 

\begin{theorem}
\label{thm:main}
We consider tabular MDPs with  finite $\cS$ and $\cA$.  
Let $\pi_t$ be the produced policy of CPI at iteration $t$. 
There exists a parameter $\tau>0$ such that for any $s\in\cS$ 
\begin{align}
V^*_{\pi_{\cD}}(s) - V^{\pi_t}(s) 
\leq
\frac{1}{(1-\gamma)^2} \sqrt{\frac{2\log |\cA|}{t}} \, .
\end{align}
\end{theorem}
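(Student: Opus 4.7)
The plan is to view CPI as the classical \emph{mirror descent / Politex} scheme applied to the MDP, with the twist that the initialization $\pi_0=\pi_{\cD}$ confines every iterate to the data support. The analysis then follows the standard average-iterate regret bound for entropy-regularized policy optimization, and is upgraded to a last-iterate bound using the monotone improvement supplied by \cref{prop:cpo}. I would organize the proof into the four stages below.

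First, I would solve the inner KL-regularized maximization in closed form to obtain the softmax update $\pi_{t+1}(a|s)\propto \pi_t(a|s)\exp(Q^{\pi_t}(s,a)/\tau)$. Since $\pi_0=\pi_{\cD}$ and the multiplicative factor is strictly positive, a one-line induction gives $\mathrm{supp}(\pi_t(\cdot|s))\subseteq \mathrm{supp}(\pi_{\cD}(\cdot|s))$ for every $t$ and every $s$. Hence CPI never queries an OOD action, and the in-sample optimal policy $\pi^\star$ defined by \plaineqref{eq:in-sample-bellman-optimality} is a valid comparator supported inside the feasible set, with values uniformly bounded by $1/(1-\gamma)$.

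Next, I would fix a state $s$ and treat $\{\pi_t(\cdot|s)\}$ as exponentiated-gradient iterates on the simplex over $\mathrm{supp}(\pi_{\cD}(\cdot|s))$, with linear losses $-Q^{\pi_t}(s,\cdot)$ and step size $1/\tau$. The three-point KL lemma, together with the uniform bound $\|Q^{\pi_t}\|_\infty \le 1/(1-\gamma)$, yields a per-state regret bound of the form $\tau\log|\cA|+\tfrac{T}{2\tau(1-\gamma)^2}$; tuning $\tau=\tfrac{1}{1-\gamma}\sqrt{2\log|\cA|/T}$ gives the familiar $\tfrac{1}{1-\gamma}\sqrt{2T\log|\cA|}$ rate. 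I would then invoke the Kakade--Langford performance-difference lemma, $V^{\pi^\star}(s)-V^{\pi_t}(s)=\tfrac{1}{1-\gamma}\,\EE_{s'\sim d^{\pi^\star}_s}\langle Q^{\pi_t}(s',\cdot),\pi^\star(\cdot|s')-\pi_t(\cdot|s')\rangle$, take expectations of the per-state regret under $d^{\pi^\star}_s$ (which preserves the rate), sum over $t=1,\ldots,T$ and divide, obtaining the average-iterate guarantee $\tfrac{1}{T}\sum_{t=1}^T [V^{\pi^\star}(s)-V^{\pi_t}(s)]\le \tfrac{1}{(1-\gamma)^2}\sqrt{2\log|\cA|/T}$.

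To pass from the average iterate to the last iterate, I would apply \cref{prop:cpo} at each round with $\bar\pi=\pi_t$ and value $Q^{\bar\pi}=Q^{\pi_t}$: this gives $V^{\pi_{t+1}}(s)\ge V^{\pi_t}(s)$ pointwise in $s$. The sequence $V^{\pi_t}(s)$ is therefore non-decreasing, so $V^{\pi_t}(s)\ge \tfrac{1}{t}\sum_{k=1}^{t} V^{\pi_k}(s)$, and the average-iterate bound immediately upgrades to the claimed last-iterate bound.

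The main obstacle is the mirror-descent step: the three-point inequality naturally bounds $\langle Q^{\pi_t},\pi^\star-\pi_{t+1}\rangle$, whereas the performance-difference lemma requires $\langle Q^{\pi_t},\pi^\star-\pi_t\rangle$. I would close this gap by writing the latter as the former plus the one-step drift $\langle Q^{\pi_t},\pi_{t+1}-\pi_t\rangle$, and controlling the drift through Pinsker's inequality against the negative $-\KL(\pi_{t+1}\|\pi_t)$ term that the three-point lemma produces for free. The delicate bookkeeping is to ensure that the constants line up exactly with the stated prefactor $\tfrac{1}{(1-\gamma)^2}$; everything else is essentially mechanical once the support-preservation and monotonicity observations are in place.
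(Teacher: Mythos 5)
Your proposal is correct, and its skeleton coincides with the paper's: restrict attention to the sub-MDP supported by $\pi_\cD$ (support preservation follows from the multiplicative softmax form of the update, i.e.\ \cref{lem:nachum-softmax}), run the Politex/mirror-descent regret analysis against the in-sample optimal comparator of \plaineqref{eq:in-sample-bellman-optimality} to obtain the average-iterate bound $\frac{1}{(1-\gamma)^2}\sqrt{2\log|\cA|/t}$ via the performance-difference lemma, and then show the last iterate is at least as good as the average of the iterates. The one place you genuinely diverge is that final step: the paper (in the proof of \cref{lem:politex}) bounds $\frac{1}{k}\sum_{i} V^{\pi_i} - V^{\pi_{k-1}} \le 0$ by a follow-the-leader-style telescoping of the entropy-regularized objectives, $\sum_{i}\pi_{k-1}\hat q_i \ge \sum_i \pi_i \hat q_i$, which needs the softmax optimality of each $\pi_i$ and the maximal entropy of $\pi_0$; you instead invoke the pointwise monotone improvement $V^{\pi_{t+1}}(s)\ge V^{\pi_t}(s)$ from \cref{prop:cpo} applied with $\bar\pi=\pi_t$, so that a non-decreasing sequence dominates its running average. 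Your route is the more elementary and self-contained of the two (it reuses a result the paper has already established and is a strictly stronger statement than what the telescoping yields), while the paper's telescoping is the argument that survives when the evaluation step is inexact and monotonicity can fail. One caveat you share with the paper rather than introduce: the $\tau\log|\cA|$ term in the regret bound presumes $\KL(\pi^\star\|\pi_0)\le\log|\cA|$, i.e.\ $\pi_0$ uniform on the support of $\pi_\cD$; with the literal initialization $\pi_0=\pi_\cD$ this constant becomes $\KL(\pi^\star\|\pi_\cD)$, a looseness already present in the hypotheses of \cref{lem:politex}.
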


To ensure robustness to data distribution, an offline RL algorithm must possess the \emph{stitching} capability, which involves seamlessly integrating suboptimal trajectories from the dataset. In-sample optimality provides a formal definition to characterize this ability. 
As discussed in previous works and confirmed by our experiments, existing behavior regularization approaches, such as TD3+BC \citep{fujimoto2021minimalist}, fall short in this regard. In contrast, \cref{thm:main} suggests that iteratively refining the reference policy for behavior regularization can enable the algorithm to acquire the stitching ability, marking a significant improvement over existing approaches. Our empirical studies in the experiments section further support this claim.

\begin{algorithm*}[!htbp]
\centering
    \caption{CPI \& CPI-RE}
    \label{alg:ISPI}
    \begin{algorithmic}[1] 
        \STATE Initialize two actors networks $\pi_1$, $\pi_2$ and critic networks $q_1$, $q_2$ with random parameters $\omega^1$, $\omega^2$, $\theta^1$, $\theta^2$;  target networks $\bar \theta^1\longleftarrow \theta^1$, $\bar \theta^2\longleftarrow \theta^2$, $\bar \omega^1\longleftarrow \omega^1$, $\bar \omega^2\longleftarrow \omega^2$.
        \FOR{$t=0,1,2...,T$}
       
        \STATE  Sample a mini-batch of transitions ${(s, a, r, s^\prime)}$ from offline dataset $\mathcal{D}$
        \STATE Update the parameters of critic $\theta^i$ using~\eqref{eq:critic-loss}
        \IF{$t \ \text{mod} \ 2$}  
         \STATE // For CPI
         \STATE {\color{orange}{Copy the historical snapshot of $\omega^1$ to $\omega^2$ and update $\omega^1$ using~\eqref{eq:seq-cpi1}}}
         \STATE // For CPI-RE
         \STATE {\color{cyan}{Choose the current best policy between  $\omega^1$ and $\omega^2$ as the reference policy} } 
         \STATE {\color{cyan}{Update $\omega^1$ and $\omega^2$  using~\eqref{eq:seq-cpi1} with the cross-update scheme} } 
  
        \STATE Update target networks
        \ENDIF
        \ENDFOR
    \end{algorithmic}
\end{algorithm*}

\section{Practical Implementations}

In this section we discuss how to implement CPI properly when function approximation is applied.  Throughout this section we develop algorithms for continuous actions. Extension to discrete action setting is straightforward.  
We build CPI as an actor-critic algorithm based on TD3+BC \citep{fujimoto2021minimalist}. 
 We learn an actor $\pi_\omega$ with parameters $\omega$, and critic $Q_\theta$ with  parameters $\theta$.  
 The policy $\pi_\omega$  is parameterized using a Gaussian policy with learnable mean \citep{fujimoto2018addressing}.  
 We also normalize features of every states in the offline dataset as discussed in \citep{fujimoto2021minimalist}. 
 
CPI consists of two steps at each iteration. 
The first step involves policy evaluation, which is carried out using standard TD learning: we learn the critic by
\begin{align}
\min_{\theta} \EE_{s,a,r,s'\sim\cD, a'\sim \pi_\omega(s')} \left[ \frac{1}{2} \left( r + \gamma Q_{\bar{\theta}}(s', a') - Q_\theta(s,a) \right)^2\right]\, ,
\label{eq:critic-loss}
\end{align} 
where $Q_{\bar{\theta}}$ is a target network. We also apply the double-Q trick to stabilize training \citep{fujimoto2018addressing,fujimoto2021minimalist}.  
The policy improvement step requires more careful algorithmic design. 
The straightforward implementation is to use gradient descent on the following
\begin{align}
 \max_{\omega'}   \EE_{s\sim\dataset} \Big[ \EE_{a\sim \pi_{\omega'}}[Q_\theta (s, a) ] - \tau \KL(\pi_\omega(s)  || \bar{\pi} (s))  \Big] \ \, .
\label{eq:seq-cpi}
\end{align}
Here, the reference policy $\bar{\pi}$ is a copy of the current parameters ${\omega}$ and kept frozen during optimization.  
This leads to the so-called \emph{iterative actor-critic} or \emph{multi-step} algorithm \citep{brandfonbrener2021offline}. 
In their study, \citet{brandfonbrener2021offline} observe that this iterative algorithm frequently encounters practical challenges, mainly attributed to the substantial variance associated with off-policy evaluation. Similar observations have also been made in our experiments (See \cref{fig:ispi-bc-init}). 
We conjecture that while Proposition 1 establishes that the exact solution of \plaineqref{eq:seq-cpi} remains within the data support when the actor is initialized as $\pi_\omega=\pi_\cD$,  
practical implementations often rely on a limited number of gradient descent steps for optimizing \plaineqref{eq:seq-cpi}. 
This leads to policy optimization errors  which are further exacerbated iteratively.   
To overcome this issue, we find it is useful to also add the original behavior regularization, 
\begin{align}
 \max_{\omega'}   \EE_{s\sim\dataset} \Big[ \EE_{a\sim \pi_{\omega'}}[Q_\theta (s, a) ] - \tau \lambda \KL(\pi_\omega(s)  || \bar{\pi} (s)) - \tau (1-\lambda)  \lambda \KL(\pi_\omega(s)  || {\pi_{\cD}} (s)) \Big] \ \, ,
\label{eq:seq-cpi1}
\end{align}
which further constrains the policy on the support of data to enhance learning stability. 
The parameter $\lambda$  balances between one-step policy improvement and behavior regularization. 
Note that adding the term $\KL(\pi_\omega(s)  || \bar{\pi} (s))$ is the only difference compared to TD3+BC. 
In practice this can be done with one line code modification based on TD3+BC implementations. 

\paragraph{Ensembles of Reference Policy}

One limitation of \plaineqref{eq:seq-cpi1} lies in the potential for a negligible difference between the learning policy and the reference policy, due to the limited gradient steps when optimizing. This minor discrepancy may restrict the policy improvement over the reference policy.
To improve the efficiency of in-sample policy improvement, we explore the idea of using an ensembles of reference policies.  
In particular, we apply two policies with independently initialized parameters $\omega^1$ and $\omega^2$.  
Let $Q_{\theta^1}$ and $Q_{\theta^2}$ be the value functions of these two policies respectively. 
When updating the parameters $\omega^i$ for $i\in\{1,2\}$, we choose the current best policy as the reference policy, where the superiority is decided according to the current value estimate. 
In other words, we only enable a superior reference policy to elevate the performance of the learning policy, preventing the learning policy from being dragged down by an inferior reference policy. 
We call this algorithm \emph{Conservative Policy iteration with Reference Ensembles (CPI-RE)}. 
We give the pseudocode of both CPI and CPI-RE here. For CPI, the training process is exactly the same with that of TD3+BC except for the policy updating (marked in {\color{orange}{orange}}). The difference between CPI-RE and CPI is also reflected in the policy updating (marked in {\color{cyan}{blue}}). 

\if0
\paragraph{Test time action selection}
At test time, CPI-RE allows selection from the outputs of the two actors: 
 at state $s$, we calculate two actions and assess them by invoking $q_{\theta^1}(s,\pi_{\omega^1}(s))$ and $q_{\theta^2}(s,\pi_{\omega^2}(s))$. The action with the higher value is employed to interact with the environment.
We provide ablation studies on using competitive actor selection in both train and test time in the experiments. 
\fi

\section{Experiment}
We subject our algorithm to a series of rigorous experimental evaluations. We first present an empirical study to exemplify CPI's optimality in the tabular setting. Then, we compare the practical implementations of CPI, utilizing function approximation, against prior state-of-the-art algorithms in the D4RL benchmark tasks~\citep{fu2020d4rl}, to highlight its superior performance. In addition, we also present the resource consumption associated with different algorithms. Finally, comprehensive analysis of various designs to scrutinize their impact on the algorithm's performance are provided. 


\subsection{Optimality in the tabular setting}


\begin{figure}[!htbp]
\begin{center}
\includegraphics[scale=0.48]{./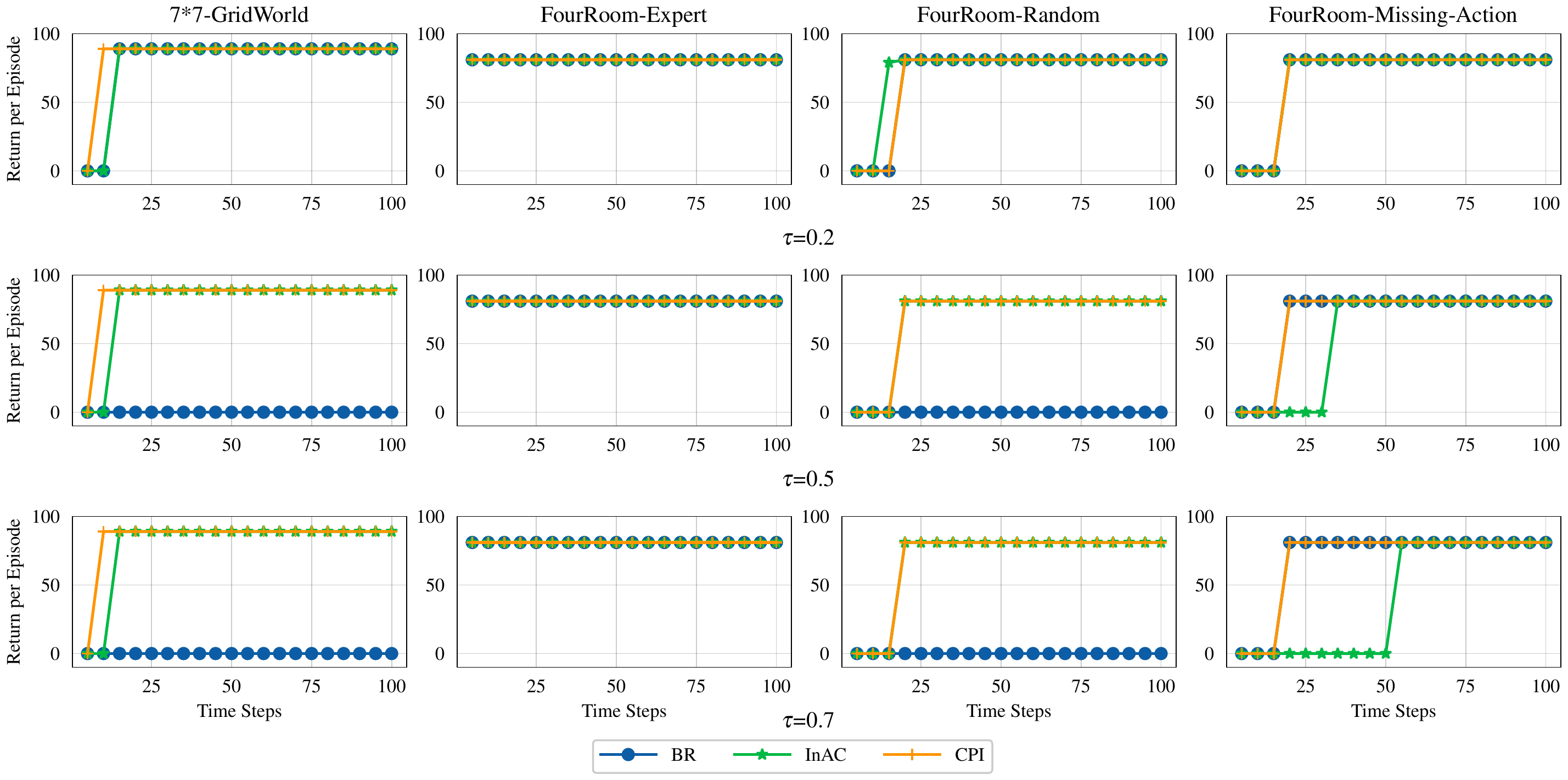}
\end{center}
	\caption{Training curves of BR, InAC and CPI on 7*7-GridWorld and FourRoom.  CPI converges to the oracle across various $\tau$ and environments settings, similar to the in-sample optimal policy InAC.}
\label{fig:tabular_optimality}
\end{figure}%

We first conduct an evaluation of CPI within two GridWorld environments. The first environment's map comprises a $7*7$ grid layout and the second environment's map is the FourRoom \citep{xiaosample}. In both environments, the agent is tasked with navigating from the bottom-left to the goal positioned in the upper-right in as few steps as possible. The agent has access to four actions: \emph{\{up, down, right, left\}}. The reward is set to -1 for each movement, with a substantial reward of 100 upon reaching the goal; this incentivizes the agent to minimize the number of steps taken. Each episode is terminated after 30 steps, and $\gamma$ is set to 0.9.  
We use an \emph{inferior behavior policy} to collect 10k transitions, of which the action probability is \emph{\{up:0.1, down:0.4, right:0.1, left:0.4\}} at every state in the $7*7$ grid environment. For the FourRoom environment, we use three types of behavior policy to collect data: (1) Expert dataset: collect 10k transitions with the optimal policy; (2) Random dataset: collect 10k transitions with a random restart and equal probability of taking each action; (3) Missing-Action dataset: remove all \emph{down} actions in transitions of the upper-left room from the Mixed dataset. 
Although some behavior policies are suboptimal, the optimal path is ensured to exist in the offline data, in which case a clear algorithm should still be able to identify the optimal path.

We consider two baseline algorithms: InAC \citep{xiaosample}, a method that guarantees to find the in-sample softmax, and a method that employs policy iteration with behavior regularization (BR), which could be viewed as an extension of TD3+BC \citep{fujimoto2021minimalist} for the discrete action setting. The policies derived from each method are evaluated using greedy strategy that selects actions with highest probability.

As illustrated in \cref{fig:tabular_optimality}, both CPI and InAC converge to the oracle across various $\tau$ and environments settings. In contrast, BR underperforms when a larger $\tau$ is applied on 7*7-GridWorld and FourRoom-random, as larger $\tau$
 means more powerful constraint on the learning policy, BR thus become more similar to the behavioral policy. 



\subsection{Results on Continuous Control Problems}

\begin{table*}[!htbp]
\centering
\caption{Average normalized scores of CPI with the mean and standard deviation and previous methods on the D4RL benchmark. D-QL is short for Diffusion-QL. CPI achieves best overall performance among all the methods and consumes quite few computing resources.}
\scalebox{0.7}{
\begin{tabular}{l|rrrrrrrrrr}
\toprule
Dataset                   & DT     & TD3+BC & CQL           & IQL    & POR            & EDAC            & InAC          & D-QL            & \textbf{CPI}        & \textbf{CPI-RE}        \\ \midrule
halfcheetah-random        & 2.2    & 11.0   & \textbf{31.3} & 13.7   & 29.0           & 28.4            & 19.6          & 22.0            & \textbf{29.7$\pm$1.1}  & \textbf{30.7$\pm$0.4}    \\
hopper-random             & 5.4    & 8.5    & 5.3           & 8.4    & 12.0           & 25.3            & \textbf{32.4} & 18.3            & \textbf{29.5$\pm$3.7}  & \textbf{30.4$\pm$2.9}     \\
waker2d-random            & 2.2    & 1.6    & 5.4           & 5.9    & 6.3            & \textbf{16.6}   & 6.3           & 5.5             & \textbf{5.9$\pm$1.7}   & \textbf{5.5$\pm$0.9}        \\
halfcheetah-medium        & 42.6   & 48.3   & 46.9          & 47.4   & 48.8           & \textbf{65.9}   & 48.3          & 51.5            & \textbf{64.4$\pm$1.3}  & \textbf{65.9$\pm$1.6}      \\
hopper-medium             & 67.6   & 59.3   & 61.9          & 66.3   & 98.2           & \textbf{101.6}  & 60.3          & 96.6            & \textbf{98.5$\pm$3.0}  & \textbf{97.9$\pm$4.4}      \\
waker2d-medium            & 74.0   & 83.7   & 79.5          & 78.3   & 81.1           & \textbf{92.5}   & 82.7          & 87.3            & \textbf{85.8$\pm$0.8}  & \textbf{86.3$\pm$1.0}   \\
halfcheetah-medium-replay & 36.6   & 44.6   & 45.3          & 44.2   & 43.5           & \textbf{61.3}   & 44.3          & 48.3            & \textbf{54.6$\pm$1.3}  & \textbf{55.9$\pm$1.5}       \\
hopper-medium-replay      & 82.7   & 60.9   & 86.3          & 94.7   & 98.9           & 101.0           & 92.1          & \textbf{102.0}  & \textbf{101.7$\pm$1.6} & \textbf{103.2$\pm$1.4}   \\
waker2d-medium-replay     & 66.6   & 81.8   & 76.8          & 73.9   & 76.6           & 87.1            & 69.8          & \textbf{98.0}   & \textbf{91.8$\pm$2.9}  & \textbf{93.8$\pm$2.2}     \\
halfcheetah-medium-expert & 86.8   & 90.7   & 95.0          & 86.7   & 94.7           & \textbf{106.3}  & 83.5          & 97.2            & \textbf{94.7$\pm$1.1}  & \textbf{95.6$\pm$0.9}    \\
hopper-medium-expert      & 107.6  & 98.0   & 96.9          & 91.5   & 90.0           & 110.7           & 93.8          & \textbf{112.3}  & \textbf{106.4$\pm$4.3} & \textbf{110.1$\pm$4.1}   \\
waker2d-medium-expert     & 108.1  & 110.1  & 109.1         & 109.6  & 109.1          & \textbf{114.7}  & 109.0         & 111.2           & \textbf{110.9$\pm$0.4} & \textbf{111.2$\pm$0.5}   \\
halfcheetah-expert        & 87.7   & 96.7   & 97.3          & 94.9   & 93.2           & \textbf{106.8}  & 93.6          & 96.3            & \textbf{96.5$\pm$0.2}  & \textbf{97.4$\pm$0.4}      \\
hopper-expert             & 94.2   & 107.8  & 106.5         & 108.8  & \textbf{110.4} & 110.1           & 103.4         & 102.6           & \textbf{112.2$\pm$0.5} & \textbf{112.3$\pm$0.5}  \\
walker2d-expert           & 108.3  & 110.2  & 109.3         & 109.7  & 102.9          & \textbf{115.1}  & 110.6         & 109.5           & \textbf{110.6$\pm$0.1} & \textbf{111.2$\pm$0.2}     \\ \midrule
Gym-MuJoCo Total          & 972.6  & 1013.2 & 1052.8        & 1034.0 & 1094.7         & \textbf{1243.4} & 1049.7        & 1158.6          & \textbf{1193.2}        & \textbf{1207.4}        \\ \midrule
antmaze-umaze             & 59.2   & 78.6   & 74.0          & 87.5   & 76.8           & 16.7            & 84.8          & \textbf{96.0}   & \textbf{98.8$\pm$1.1}  & \textbf{99.2$\pm$0.5}    \\
antmaze-umaze-diverse     & 53.0   & 71.4   & \textbf{84.0} & 62.2   & 64.8           & 0.0             & 82.4          & \textbf{84.0}   & \textbf{88.6$\pm$5.7}  & \textbf{92.6$\pm$10.0}   \\
antmaze-medium-play       & 0.0    & 3.0    & 61.2          & 71.2   & \textbf{87.2}  & 0.0             & -             & 79.8            & \textbf{82.4$\pm$5.8}  & \textbf{84.8$\pm$5.0}     \\
antmaze-medium-diverse    & 0.0    & 10.6   & 53.7          & 70.0   & 75.2           & 0.0             & -             & \textbf{82.0}   & \textbf{80.4$\pm$8.9}  & \textbf{80.6$\pm$11.3}    \\
antmaze-large-play        & 0.0    & 0.0    & 15.8          & 39.6   & 24.4           & 0.0             & -             & \textbf{49.0}   & \textbf{20.6$\pm$16.3} & \textbf{33.6$\pm$8.1}     \\
antmaze-large-diverse     & 0.0    & 0.2    & 14.9          & 47.5   & 59.2           & 0.0             & -             & \textbf{61.7}   & \textbf{45.2$\pm$6.9}  & \textbf{48.0$\pm$6.2}    \\ \midrule
Antmaze Total             & 112.2  & 163.8  & 303.6         & 378.0  & 387.6          & 16.7            & -             & \textbf{452.5}  & \textbf{416.0}         & \textbf{438.8}        \\ \midrule
pen-human                 & 73.9   & -1.9   & 35.2          & 71.5   & \textbf{76.9}  & 52.1            & 52.3          & 75.7            & \textbf{80.1$\pm$16.9} & \textbf{87.0$\pm$25.3}      \\
pen-cloned                & 67.3   & 9.6    & 27.2          & 37.3   & 67.6           & \textbf{68.2}   & -8.0          & 60.8            & \textbf{71.8$\pm$35.2} & \textbf{70.7$\pm$15.8}     \\ \midrule
Adroit Total              & 141.2  & 7.7    & 62.4          & 108.8  & \textbf{144.5} & 120.3           & 44.3          & 136.5           & \textbf{151.9}         & \textbf{157.7}              \\ \midrule
Total                     & 1226.0 & 1184.7 & 1418.8        & 1520.8 & 1626.8         & 1380.4          & -             & \textbf{1747.6} & \textbf{1761.2}        & \textbf{1803.9}            \\
\midrule
Runtime (s/epoch)  &  -   &  \textbf{7.4}   &  -   &   -   &  -    &  19.6  &   -  &   39.8   &  \textbf{8.5}  &   19.1\\
GPU Memory (GB)  &   -   &   \textbf{1.4}  &  -   &  -    &   -   &   1.9  &   - &   1.5   &   \textbf{1.4}   &  \textbf{1.4} \\
\bottomrule
\end{tabular}
}
\label{performance}
\end{table*}

In this section we provide a suite of results using three continuous control tasks from D4RL \citep{fu2020d4rl}: Mujoco, Antmaze, and Adroit.  
Mujoco, a benchmark often used in previous studies forms the basis of our experimental framework. 
Adroit is a high-dimensional robotic manipulation task with sparse rewards.  
Antmaze, with its sparse reward property, necessitates that the agent learns to discern segments within sub-optimal trajectories and to assemble them, thereby discovering the complete trajectory leading to a rewardable position. Given that the majority of datasets for these tasks contain a substantial volume of sub-optimal or low-quality data, relying solely on behavior-regularization may be detrimental to performance.

We compare CPI with several baselines, including DT~\citep{chen2021decision}, TD3+BC~\citep{fujimoto2021minimalist}, CQL~\citep{kumar2020conservative}, IQL~\citep{kostrikov2022offline}, POR~\citep{xu2022a},  EDAC~\citep{an2021uncertainty},
Diffusion-QL~\citep{wang2022diffusion}, 
and InAC~\citep{xiaosample}.  
The results of baseline methods are either reproduced by executing the official code or sourced directly from the original papers. 
Unless otherwise specified, results are depicted with 95\% confidence intervals, represented by shaded areas in figures and expressed as standard deviations in tables. The average normalized results of the final 10 evaluations for Mujoco and Adroit, and the final 100 evaluations for Antmaze, are reported. 
More details of the experiments are provided in the Appendix.

Our experiment results, summarized in \cref{performance}, 
clearly show CPI outperforms baselines in overall. 
In most Mujoco tasks, CPI surpasses the extant, widely-utilized algorithms, and it only slightly trails behind the state-of-the-art EDAC method. 
For Antmaze and Adroit, CPI's performance is on par with the top-performing methods such as POR and Diffusion-QL. We also evaluate the resource consumption of different algorithms from two aspects: (1) runtime per training epoch (1000 gradient steps); (2) GPU memory consumption.  The results in Table \ref{performance} show that CPI requires fewer resources which could be beneficial for practitioners. 



\subsection{Ablation Studies}

\subsubsection{Effect of Reference Ensemble }
We provide learning curves of CPI and CPI-RE on  Antmaze in \cref{fig:antmaze-performance-maintext} to further show the efficacy of using ensemble of reference policies.  CPI-RE exhibits a more stable performance compared to vanilla CPI, also outperforming IQL.  Learning curves on other domains are provided in the Appendix. 

\begin{figure}[!htbp]
\centering
\includegraphics[scale=0.48]{./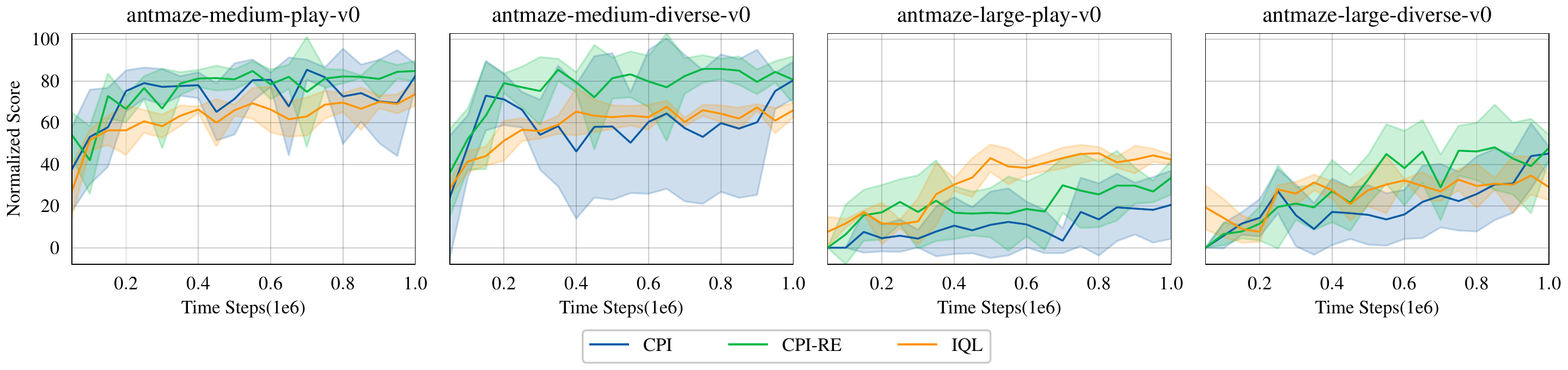}
\caption{The learning curves of CPI, CPI-RE, and IQL on Antmaze. With reference ensemble, CPI-RE exhibits a more stable performance compared to vanilla CPI, also outperforming IQL. }
\label{fig:antmaze-performance-maintext}
\end{figure}%

\subsubsection{Effect of using Behavior Regularization}

A direct implementation based on the theoretical results observed in the tabular setting could be derived. Specifically, we  initialize $\pi_\omega=\pi_\cD$ and execute the CPI without incorporating behavior regularization. For each gradient step, we perform \emph{one-step} or \emph{multi-step} updates for both policy evaluation and policy improvement. The empirical outcomes presented in Fig.\ref{fig:ispi-bc-init} underscore the poor performance of this straightforward approach. Similar trends are also observed in \citep{brandfonbrener2021offline}. Such phenomenon arises primarily because, in the presence of function approximation, the deviation emerges when policy optimization does not necessarily remain within the defined support without the behavior regularization.

\begin{figure}[!htbp]
\centering
\includegraphics[scale=0.48]{./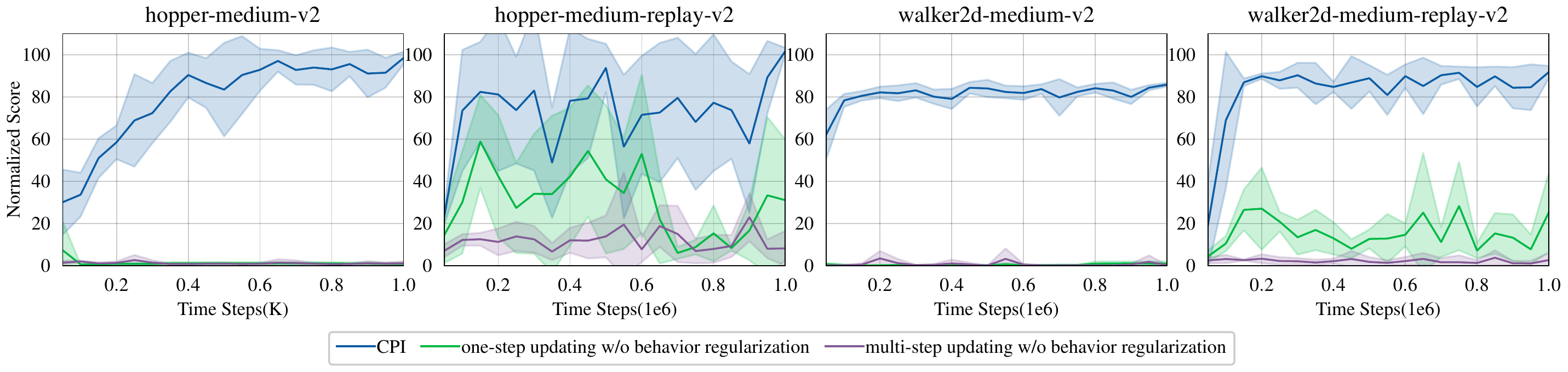}
\caption{Effect of using Behavior Regularization. Without it, both one-step and multi-step updating methods at each gradient step suffer from deviation derived from out-of-support optimization.}
\label{fig:ispi-bc-init}
\end{figure}%

\subsubsection{Effect of using different KL strategies}
In our implementation of the CPI method, we employ the reverse KL divergence. This is distinct from the forward KL divergence approach adopted by \citep{nair2020awac}. A comprehensive ablation of incorporating different KL divergence strategies in CPI is presented in Fig.\ref{fig:ispi-awac}. As evidenced from the results, our reverse KL-based CPI exhibits superior performance compared to the forward KL-based implementations. Implementations details of CPI with forward KL are provided in Appendix.

\begin{figure}[!htbp]
\centering
\includegraphics[scale=0.48]{./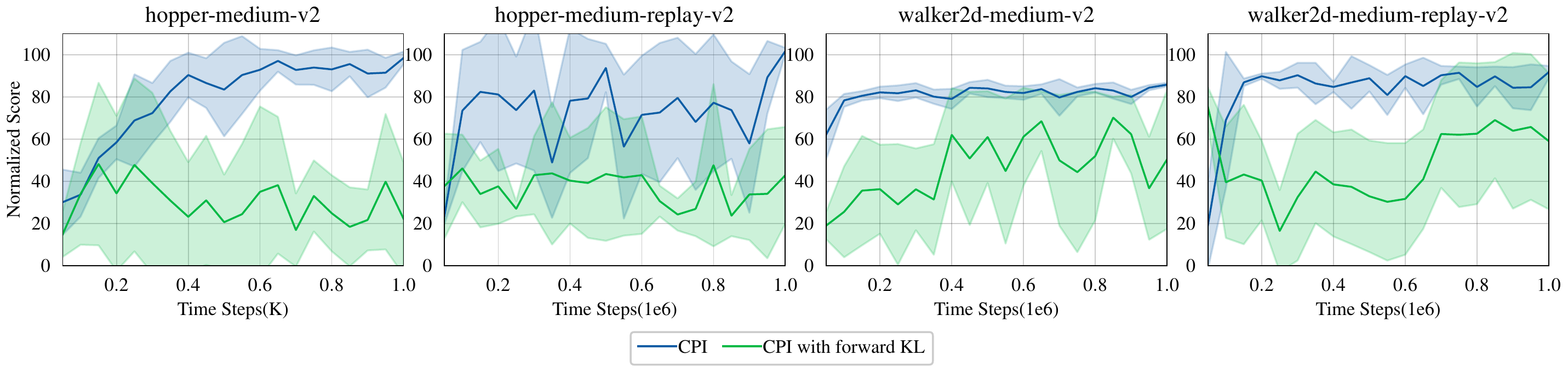}
\caption{Effect of using different KL strategies. Reverse KL-based CPI exhibits superior performance compared to the forward KL-based CPI. See Appendix for details of forward KL.}
\label{fig:ispi-awac}
\end{figure}%

 \subsubsection{Effect of Hyper Parameters}
 
\cref{ablation-parameter} illustrates the effects of using different hyperparameters $\tau$ and $\lambda$, offering valuable insights for algorithm tuning. The weighting coefficient $\lambda$ regulates the extent of behavior policy integration into the training and affects the training process. 
As shown in \cref{ablation-parameter}a, when $\lambda=0.1$, the early-stage performance excels, as the behavior policy assists in locating appropriate actions in the dataset.  
However, this results in suboptimal final convergence performance, attributable to the excessive behavior policy constraint on performance improvement. 
For larger values, such as 0.9, the marginal weight of the behavior policy leads to performance increase during training. Unfortunately, the final performance might be poor. This is due to that the policy does not have sufficient behavior cloning guidance, 
leading to a potential distribution shift during the training process. Consequently, we predominantly select a $\lambda$ value of 0.5 or 0.7 to strike a balance between the reference policy regularization and behavior regularization. The regularization parameter $\tau$ plays a crucial role in determining the weightage of the joint regularization relative to the Q-value component. 
We find that (\cref{ablation-parameter}b) $\tau$ assigned to dataset of higher quality and lower diversity (e.g., expert dataset) ought to be larger than those associated with datasets of lower quality and higher diversity (e.g., medium dataset). 

\begin{figure}[!htbp]
	\centering
	\subfloat[Effect of weighting coefficient $\lambda$ \label{ablation-lambda}]
        {
		\includegraphics[scale=0.45]{./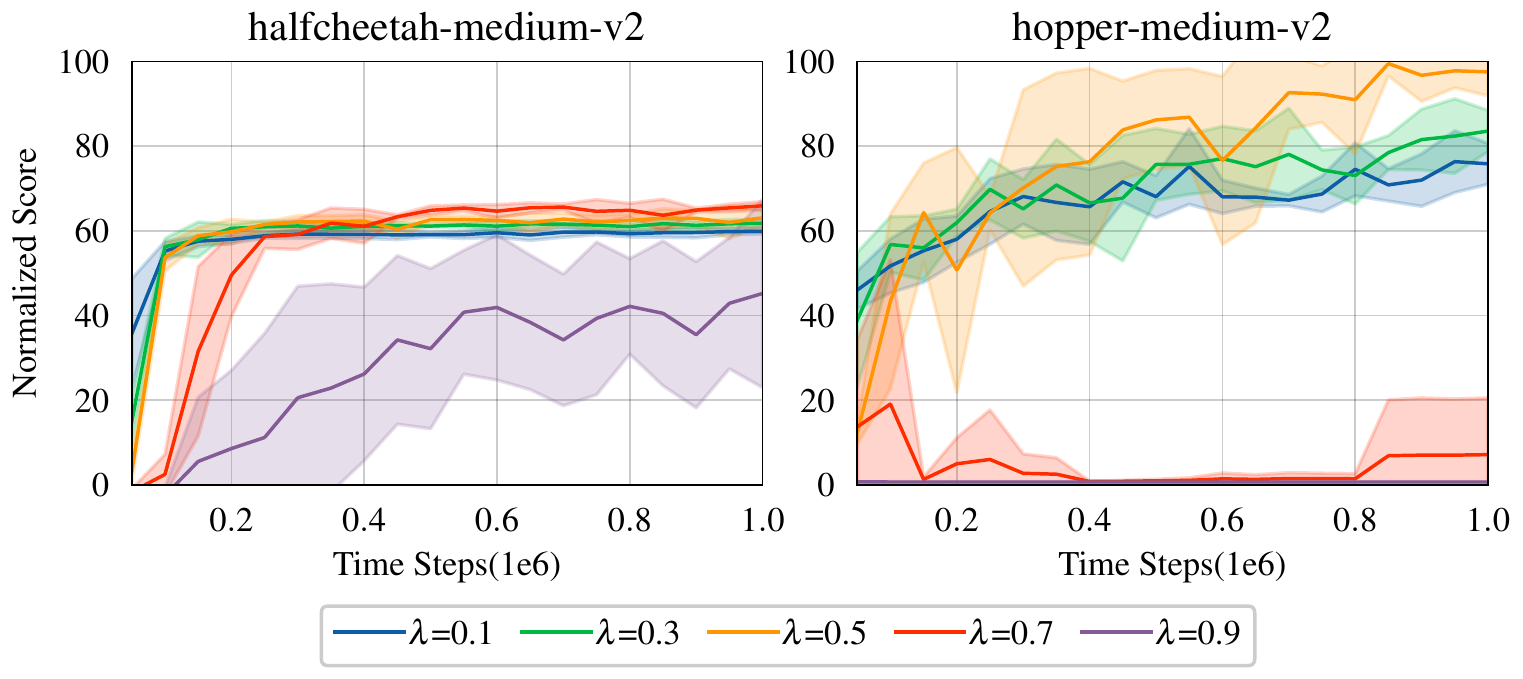}}
        \hspace{-2.2mm}
	\subfloat[Effect of regularization hyperparameter $\tau$\label{ablation-alpha}]
        {
		\includegraphics[scale=0.45]{./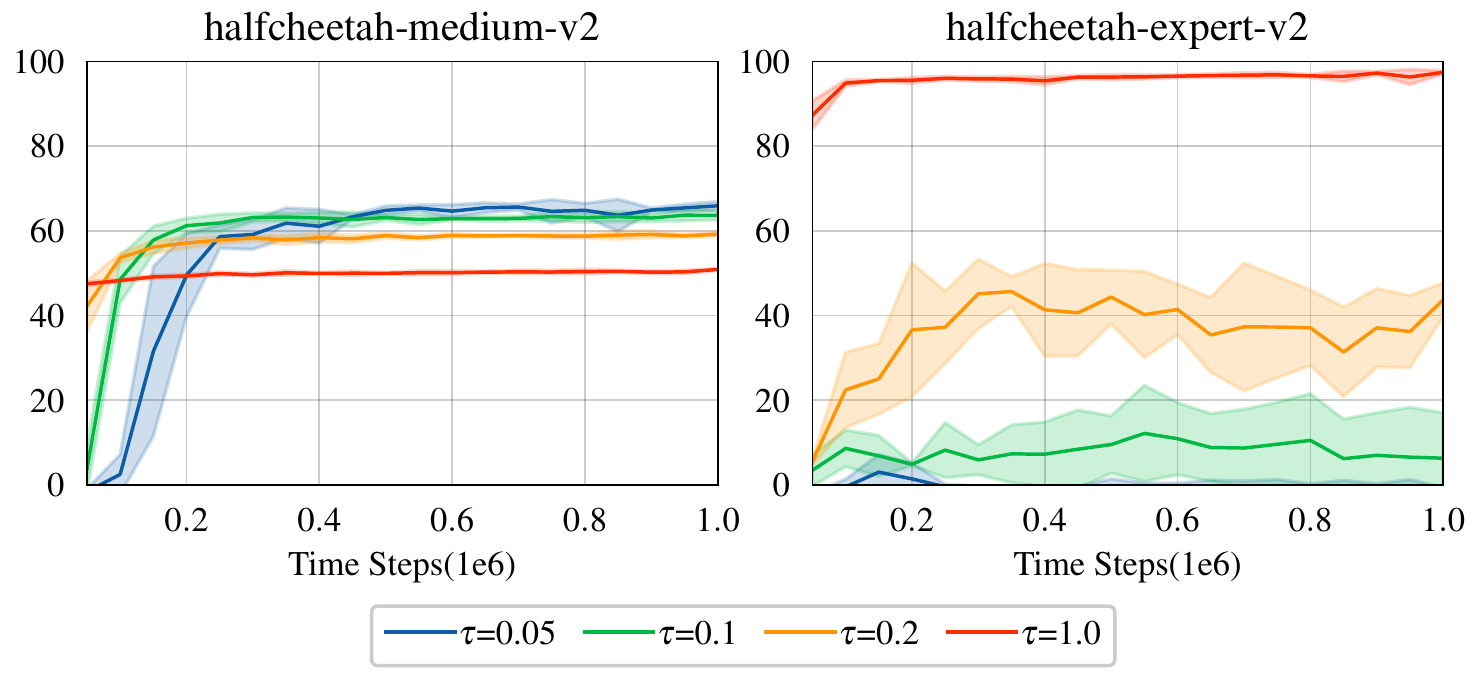} }
	\caption{Hyperparameters ablation studies. The tuning experience drawn from this include: (1) $\lambda$ could mostly be set to 0.5 or 0.7; (2) The higher the dataset quality is, the higher $\tau$ should be set.  
 }
	\label{ablation-parameter} 
\end{figure}%

\section{Conclusion}

In this paper, we propose an innovative offline RL algorithm termed \emph{Conservative Policy iteration (CPI)}. By iteratively refining the policy used for behavior regularization, CPI progressively improves itself within the behavior policy's support and provably converges to the in-sample optimal policy in the tabular setting. We then propose practical implementations of CPI for tackling continuous control tasks. Experimental results on the D4RL benchmark show that two practical implementations of CPI surpass previous cutting-edge methods in a majority of tasks of various domains, offering both expedited training speed and diminished computational overhead.  
Nonetheless, our study is not devoid of limitations. For instance, our method's performance with function approximation is contingent upon the selection of two hyperparameters, which may necessitate tuning for optimal results. Future research may include exploring CPI's potential in resolving offline-to-online tasks by properly relaxing the support constraint during online fine-tuning.

\bibliographystyle{iclr2024_conference}
\bibliography{iclr2024_conference}

\appendix
\newpage

\section{Proofs}

We first introduce some technical lemmas that will be used in the proof.

We consider a $k$-armed one-step decision making problem. 
Let $\Delta$ be a $k$-dimensional simplex and 
 $\vq=(q(1),\dots,q(k)) \in\sR^k$ be the reward vector. 
Maximum entropy optimization considers 
\begin{align} 
\max_{\pi \in\Delta}\ \pi\cdot \vq + \tau \sH(\pi)\, .
\end{align}
The next result characterizes the solution of this problem (Lemma 4 of \citep{nachum2017bridging}). 
\begin{lemma}
\label{lem:nachum-softmax}
For $\tau > 0$, 
let  
\begin{align}
F_{\tau}(\vq) = \tau \log  \sum_{a} e^{ q(a) / \tau} \, ,
\quad
f_{ \tau}(\vq) = \frac{e^{\vq / \tau}}{\sum_{a} e^{q(a) / \tau}} = e^{\frac{\vq - F_{\tau}(\vq)}{\tau}}\, .
\end{align}
Then there is
\begin{align}
F_{ \tau}(\vq) = \max_{\pi\in\Delta}\ \pi\cdot \vq + \tau \sH(\pi)
=
f_{\tau}(\vq)\cdot \vq + \tau \sH(f_{\tau}(\vq))\, .
\end{align}
\end{lemma}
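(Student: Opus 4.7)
The plan is to establish the equality by recognizing that the objective $\pi \cdot \vq + \tau \sH(\pi)$ can be rewritten, up to an additive constant, as a negative KL divergence to the candidate maximizer $f_\tau(\vq)$. This sidesteps Lagrange multipliers and turns the lemma into an immediate corollary of the non-negativity of KL divergence.

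Concretely, I would first expand $\tau \KL(\pi \,\|\, f_\tau(\vq))$ using the identity $\log f_\tau(\vq)(a) = (q(a) - F_\tau(\vq))/\tau$, which follows directly from the second form of $f_\tau$ given in the statement. A brief algebraic rearrangement then yields the key identity
\begin{align}
\pi \cdot \vq + \tau \sH(\pi) \;=\; F_\tau(\vq) \;-\; \tau \KL\bigl(\pi \,\big\|\, f_\tau(\vq)\bigr)\, ,
\end{align}
valid for every $\pi \in \Delta$. The whole proof hinges on this single identity.

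From here the lemma is immediate. Since $\KL(\pi \,\|\, f_\tau(\vq)) \geq 0$ with equality if and only if $\pi = f_\tau(\vq)$ (and $f_\tau(\vq)$ is itself an element of the open simplex because all coordinates are strictly positive), the maximum of the left-hand side over $\Delta$ equals $F_\tau(\vq)$ and is attained uniquely at $\pi = f_\tau(\vq)$. Setting $\pi = f_\tau(\vq)$ in the identity gives the second equality $F_\tau(\vq) = f_\tau(\vq)\cdot\vq + \tau \sH(f_\tau(\vq))$.

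The only step with any substance is verifying the rewriting identity, and the main thing to be careful about is simply bookkeeping the normalization term $F_\tau(\vq)$ across the two logarithms; everything else is a one-line appeal to Gibbs' inequality. I do not anticipate a genuine obstacle, since this is the standard Fenchel-dual relationship between Shannon entropy and log-sum-exp; the proposal above just packages it in the shortest possible form.
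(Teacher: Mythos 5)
Your proof is correct: the rearrangement $\pi \cdot \vq + \tau \sH(\pi) = F_\tau(\vq) - \tau \KL\bigl(\pi \,\|\, f_\tau(\vq)\bigr)$ checks out (using $\log f_\tau(\vq)(a) = (q(a) - F_\tau(\vq))/\tau$), and non-negativity of the KL divergence, with equality iff $\pi = f_\tau(\vq)$, immediately yields both claimed equalities. Note that the paper itself supplies no proof of this lemma—it is quoted verbatim as Lemma 4 of \citet{nachum2017bridging}—so there is no in-paper argument to diverge from; your KL-rearrangement is precisely the standard argument underlying that cited result, so your write-up is a correct, self-contained substitute for the citation.
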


The second result gives the error decomposition of applying the Politex algorithm to compute an optimal policy. 
This result is adopted from \citep{abbasi2019politex}. 

\begin{lemma}
Let $\pi_0$ be the uniform policy and consider running the following iterative algorithm on a MDP for $t\geq 0$,
\begin{align}
\pi_{t+1}(a | s) \propto  \pi_{t} (a|s) \exp\left(  \frac{{q}^{\pi_t} (a|s) }{\tau} \right)\, ,
\end{align}
Then
\label{lem:politex}
\begin{align}
v^*(s) - v^{\pi_t}(s) 
\leq
\frac{1}{(1-\gamma)^2} \sqrt{\frac{2\log |\cA|}{t}} \, .
\end{align}
\end{lemma}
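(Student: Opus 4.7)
My strategy is to observe that the CPI policy-improvement rule is, in closed form, exactly the Politex multiplicative update from Lemma~\ref{lem:politex}, and that initializing on the support of $\pi_{\cD}$ confines the iterates to an in-sample MDP whose optimal value is $V^{\ast}_{\pi_{\cD}}$; the bound then follows by applying Lemma~\ref{lem:politex} verbatim. Concretely, for each $s$ I would rewrite the policy improvement objective as
\begin{align*}
\sum_a \pi(a)\,Q^{\pi_t}(s,a) - \tau \KL\!\bigl(\pi\,\|\,\pi_t(\cdot|s)\bigr)
= \sum_a \pi(a)\bigl[Q^{\pi_t}(s,a) + \tau \log \pi_t(a|s)\bigr] + \tau\,\sH(\pi),
\end{align*}
and invoke Lemma~\ref{lem:nachum-softmax} with reward vector $\vq(a) = Q^{\pi_t}(s,a) + \tau\log\pi_t(a|s)$ to obtain the multiplicative form
\begin{align*}
\pi_{t+1}(a|s) \;\propto\; \pi_t(a|s)\exp\!\bigl(Q^{\pi_t}(s,a)/\tau\bigr),
\end{align*}
which is precisely the iteration treated in Lemma~\ref{lem:politex}.

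This multiplicative form preserves the support of the iterates, so initializing $\pi_0$ uniformly on $\cA_{\cD}(s):=\{a:\pi_{\cD}(a|s)>0\}$ guarantees $\mathrm{supp}\,\pi_t(\cdot|s) \subseteq \cA_{\cD}(s)$ for every $t$ (also a consequence of Proposition~\ref{prop:cpo} applied inductively). I would then introduce the restricted MDP $M_{\cD}$ obtained from $M$ by replacing $\cA$ with $\cA_{\cD}(s)$ at each state while keeping transitions and rewards unchanged; by the in-sample Bellman optimality equation~\eqref{eq:in-sample-bellman-optimality}, its optimal value function is exactly $V^{\ast}_{\pi_{\cD}}$. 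Since CPI never queries actions outside $\cA_{\cD}$, running CPI on $M$ is indistinguishable from running Politex on $M_{\cD}$ from a uniform start. Applying Lemma~\ref{lem:politex} to $M_{\cD}$ would then yield, for every $s\in\cS$ and $t\geq 1$,
\begin{align*}
V^{\ast}_{\pi_{\cD}}(s) - V^{\pi_t}(s) \;\leq\; \frac{1}{(1-\gamma)^2}\sqrt{\frac{2\log|\cA_{\cD}(s)|}{t}} \;\leq\; \frac{1}{(1-\gamma)^2}\sqrt{\frac{2\log|\cA|}{t}},
\end{align*}
using $|\cA_{\cD}(s)|\leq |\cA|$, and the requisite $\tau>0$ is inherited from the tuning prescribed in Lemma~\ref{lem:politex}.

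The main subtlety I anticipate is the choice of $\pi_0$. The $\log|\cA|$ in the Politex bound originates in a term of the form $\KL(\pi^{\ast}\,\|\,\pi_0)$ in the underlying mirror-descent analysis, which is controlled by $\log|\cA|$ only when $\pi_0$ is sufficiently spread (e.g.~uniform on $\cA_{\cD}$); taking the literal choice $\pi_0=\pi_{\cD}$ from Section~3 risks an arbitrarily large divergence if $\pi_{\cD}$ places vanishing mass on the in-sample optimal action. The theorem is therefore implicitly stated for $\pi_0$ uniform on the data support, as suggested by the commented-out version of the statement; with this understanding, the remainder of the argument is routine bookkeeping around the restricted MDP and the support-preservation property.
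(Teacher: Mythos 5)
There is a genuine gap here, and it is structural: your proposal does not prove the statement at hand — it proves \cref{thm:main} \emph{assuming} \cref{lem:politex}. The statement you were asked to establish \emph{is} \cref{lem:politex} (the suboptimality bound for the multiplicative-update iteration itself, on an arbitrary MDP with uniform $\pi_0$), yet your argument's final step reads ``applying Lemma~\ref{lem:politex} to $M_{\cD}$ would then yield\dots'', which is circular. Everything you do supply — the closed-form rewriting of the KL-regularized improvement step via \cref{lem:nachum-softmax}, the support-preservation observation, and the restricted MDP $M_{\cD}$ whose optimal value is $V^*_{\pi_{\cD}}$ by \eqref{eq:in-sample-bellman-optimality} — is precisely the paper's (short) proof of \cref{thm:main}, and your remark about the uniform-on-support initialization versus $\pi_0 = \pi_{\cD}$ is a fair catch about how the theorem must be read. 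But none of that addresses the actual analytic content of the lemma.

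What is missing is the regret-style analysis that the paper carries out. Concretely: \emph{(i)} the error decomposition $v^{\pi^*} - v^{\pi_{t-1}} = \bigl(v^{\pi^*} - \frac{1}{t}\sum_{i=0}^{t-1} v^{\pi_i}\bigr) + \bigl(\frac{1}{t}\sum_{i=0}^{t-1} v^{\pi_i} - v^{\pi_{t-1}}\bigr)$; \emph{(ii)} for the first term, the value difference lemma writes it as $\frac{1}{t}\sum_i (I-\gamma P^{\pi^*})^{-1}(M^{\pi^*}-M^{\pi_i})q^{\pi_i}$, and since the update $\pi_{t+1}(a|s)\propto \pi_t(a|s)\exp(q^{\pi_t}(s,a)/\tau)$ is exactly exponential weights run independently at each state with gain vectors $q^{\pi_i}(s,\cdot)$ bounded by $1/(1-\gamma)$, the standard mirror-descent regret bound (with $\tau$ tuned to the horizon — this is where the ``there exists $\tau>0$'' comes from) gives the per-state bound $\frac{1}{t}\sum_i (\pi^* - \pi_i)q^{\pi_i} \leq \frac{1}{1-\gamma}\sqrt{2\log|\cA|/t}$, which the factor $\|(I-\gamma P^{\pi^*})^{-1}\|_\infty \leq \frac{1}{1-\gamma}$ amplifies to $\frac{1}{(1-\gamma)^2}\sqrt{2\log|\cA|/t}$; and \emph{(iii)} for the second term, an argument that the \emph{last} iterate is at least as good as the average of the iterates — the paper shows $\sum_{i=0}^{t-1}(\pi_i - \pi_{t-1})q^{\pi_i} \leq 0$ at every state by a telescoping comparison using \cref{lem:nachum-softmax} and the entropy terms $\tau\cH(\pi_i)$, exploiting that each $\pi_{i+1}$ maximizes $\sum_{j\le i}\pi\, q^{\pi_j} + \tau\cH(\pi)$, together with $\cH(\pi_0)\geq\cH(\pi_{t-1})$ for the uniform $\pi_0$ (this is the one place the uniform initialization enters, matching your subtlety about $\KL(\pi^*\|\pi_0)$). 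Without steps (i)–(iii) — or some substitute argument of comparable substance — the lemma is unproven, so the proposal as written cannot stand in for the paper's proof.
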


\begin{proof}
We use vector and matrix operations to simply the proof. 
In particular, we use $v^\pi\in\RR^{|\cS|}$ and $q^{\pi}\in\RR^{|\cS|\times|\cA|}$. 
Let $P\in\RR^{|\cS||\cA|\times |\cS|}$ be the transition matrix, and $P^\pi\in\RR^{|\cS|\times|\cS|}$ be the transition matrix between states when applying the policy $\pi$. 

We first apply the following error decomposition
    \begin{align}
        v^{\pi^*} - v^{\pi_{k-1}} & = v^{\pi^*} - \frac{1}{k} \sum_{i=0}^{k-1} v^{\pi_i} + \frac{1}{k} \sum_{i=0}^{k-1} v^{\pi_i} - v^{\pi_{k-1}}\, .
    \end{align}
For the first part,
\begin{align}
        & v^{\pi^*} - \frac{1}{k} \sum_{i=0}^{k-1} v^{\pi_i} 
        \\
        & = \frac{1}{k}  \sum_{i=0}^{k-1} (I - \gamma P^{\pi^*})^{-1} (T^{\pi^*} v^{\pi_i} - v^{\pi_i})
        \label{eq:politex-1}
        \\
        & = \frac{1}{k}  \sum_{i=0}^{k-1} (I - \gamma P^{\pi^*})^{-1} (M^{\pi^*} - M^{\pi_i}) q^{\pi_i}
        \\
        & \leq \frac{1}{(1-\gamma)^2} \sqrt{\frac{2\log |\cA|}{k}} \, ,
        \label{eq:politex-2}
    \end{align}
where \cref{eq:politex-1} follows by the value difference lemma, \cref{eq:politex-2} follows by applying the regret bound of mirror descent algorithm for policy optimization. 
For the second part, 
\begin{align}
    & \frac{1}{k} \sum_{i=0}^{k-1} v^{\pi_i} - v^{\pi_{k-1}}
    \\
    & = \frac{1}{k} \sum_{i=0}^{k-1} (I - \gamma P^{\pi_k-1})^{-1} (v^{\pi_i} - T^{\pi_{k-1}} v^{\pi_i})
    \\
    & = \frac{1}{k} \sum_{i=0}^{k-1} (I - \gamma P^{\pi_k-1})^{-1} (M^{\pi_i} - M^{\pi_{k-1}}) q^{\pi_i}
    \\
    & \leq 0
\end{align}
where for the last step we use that for any $s\in\cS$, $\sum_{i=0}^{k-1} (\pi_i - \pi_{k-1})\hat{q}_i \leq 0$. This follows by 
\begin{align}
    \sum_{i=0}^{k-1} \pi_{k-1} \hat{q}_i & = \sum_{i=0}^{k-2} \pi_{k-1} \hat{q}_i + \pi_{k-1} \hat{q}_{k-1} + \tau \cH(\pi_{k-1}) - \tau \cH(\pi_{k-1}) 
    \\
    \geq & \sum_{i=0}^{k-2} \pi_{k-2} \hat{q}_i + \pi_{k-1} \hat{q}_{k-1} + \tau \cH(\pi_{k-2}) - \tau \cH(\pi_{k-1})
    \label{eq:politex-3}
    \\
    \geq & \cdots \cdots
    \\
    \geq & \sum_{i=0}^{k-1} \pi_i \hat{q}_i + \tau \cH(\pi_0) - \tau \cH(\pi_{k-1}) 
    \\
     \geq & \sum_{i=0}^{k-1} \pi_i \hat{q}_i \, ,
     \label{eq:politex-4} 
\end{align}
where \cref{eq:politex-3} follows by applying \cref{lem:nachum-softmax} and the definition of $\pi_{k-1}$, 
\cref{eq:politex-4} follows by the definition of $\pi_0$. 
Combine the above together finishes the proof. 
\end{proof}

\begin{proof}[Proof of  \cref{thm:main}]

First recall the in-sample optimality equation
\begin{align}
q_{\pi_\dataset}^{*}(s,a) = r(s,a) + \gamma \EE_{s'\sim P(\cdot|s,a)} \left[ \max_{a': \pi_\dataset(a'|s') > 0} q_{\pi_\dataset}^{*}(s', a') \right]\, , 
\end{align}
which could be viewed as the optimal value of a MDP $M_{\cD}$ covered by the behavior policy $\pi_{\cD}$, where $M_{\cD}$ only contains transitions starting with $(s,a)\in\cS\times\cA$ such that $\pi_{\cD}(a|s) > 0$.  
Then the result can be proved by two steps. First, note that the CPI algorithm will never consider actions such that $\pi_{\cD}(a|s) = 0$. 
This is directly implied by \cref{lem:nachum-softmax}. 
Second, we apply \cref{lem:politex} to show the error bound of using CPI on $M_{\cD}$. This finishes the proof. 

\end{proof}

\begin{proposition}
let $\bar{\pi}^*$ be the optimal policy of \plaineqref{eq:md}. For any $s\in\cS$, we have that $V^{\bar{\pi}^*}(s) \geq V^{\bar{\pi}}(s)$ ; and $\bar{\pi}^*(a|s)=0$ given $\bar{\pi}(a|s)=0$. 
\end{proposition}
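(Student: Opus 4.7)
The plan is to handle the two claims separately; both follow from standard arguments once the structure of \plaineqref{eq:md} is unpacked. Throughout, note that although \plaineqref{eq:md} is stated pointwise in $s$, its solution defines a stationary policy $\bar{\pi}^*$ over all of $\cS$, so the conclusions about values must be obtained by invoking the Bellman operator across states.

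For the support preservation claim, I would argue by contradiction. The KL term $\KL(\pi(\cdot|s)\,\|\,\bar{\pi}(\cdot|s))$ takes the value $+\infty$ whenever $\pi$ puts positive mass on an action $a$ with $\bar{\pi}(a|s)=0$ (this is the standard absolute-continuity requirement). Since the objective of \plaineqref{eq:md} is bounded above by $\max_a Q^{\bar{\pi}}(s,a)<\infty$ and is attained, for instance, by $\pi=\bar{\pi}$ itself (where both terms are finite), any candidate policy violating the support condition is strictly suboptimal. Hence $\bar{\pi}^*(a|s)=0$ whenever $\bar{\pi}(a|s)=0$.

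For the improvement claim, the key step is to compare the optimum to the feasible choice $\pi=\bar{\pi}$. Plugging $\bar{\pi}$ into \plaineqref{eq:md} gives exactly $V^{\bar{\pi}}(s)$, since $\EE_{a\sim\bar{\pi}}[Q^{\bar{\pi}}(s,a)]=V^{\bar{\pi}}(s)$ and $\KL(\bar{\pi}\,\|\,\bar{\pi})=0$. Optimality of $\bar{\pi}^*$ then yields
\begin{equation*}
\EE_{a\sim\bar{\pi}^*}[Q^{\bar{\pi}}(s,a)] \;-\; \tau\,\KL\!\left(\bar{\pi}^*(\cdot|s)\,\|\,\bar{\pi}(\cdot|s)\right) \;\geq\; V^{\bar{\pi}}(s).
\end{equation*}
Dropping the nonnegative KL term gives $\EE_{a\sim\bar{\pi}^*}[Q^{\bar{\pi}}(s,a)]\geq V^{\bar{\pi}}(s)$ for every $s\in\cS$.

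The remaining step is the classical Policy Improvement Theorem. Let $T^{\bar{\pi}^*}$ denote the Bellman operator for $\bar{\pi}^*$, so the previous inequality reads $T^{\bar{\pi}^*}V^{\bar{\pi}}\geq V^{\bar{\pi}}$ pointwise. By monotonicity of $T^{\bar{\pi}^*}$ (which follows since the transition kernel has nonnegative entries and $\gamma>0$), iterating gives $(T^{\bar{\pi}^*})^n V^{\bar{\pi}} \geq V^{\bar{\pi}}$ for all $n$, and as $n\to\infty$ the left-hand side converges to $V^{\bar{\pi}^*}$ by the contraction property of $T^{\bar{\pi}^*}$. This yields $V^{\bar{\pi}^*}(s)\geq V^{\bar{\pi}}(s)$ for all $s\in\cS$. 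I do not expect a real obstacle here—the only subtlety is passing from a pointwise one-step $Q$-improvement to the global value improvement, which is handled by invoking monotonicity plus the contraction of the Bellman operator.
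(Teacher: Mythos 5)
Your proof is correct and follows essentially the same route as the paper: the improvement claim is obtained exactly as in the paper's proof, by comparing the optimum to the feasible choice $\pi=\bar{\pi}$, dropping the nonnegative KL term to get $\EE_{a\sim\bar{\pi}^*}[Q^{\bar{\pi}}(s,a)]\geq V^{\bar{\pi}}(s)$, and then running the standard policy-improvement recursion that the paper only gestures at. The sole (minor) difference is in the support claim, where the paper invokes the closed-form softmax solution $\bar{\pi}^*(a|s)\propto\bar{\pi}(a|s)\exp(Q^{\bar{\pi}}(s,a)/\tau)$ from its Lemma~1, whereas you argue via the absolute-continuity requirement of the KL divergence; both are standard and equivalent here.
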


\begin{proof}[Proof of \cref{prop:cpo}]

We first prove the first part. 
\begin{align}
    \EE_{a\sim \bar{\pi}^* } \left[ Q^{\bar{\pi}} (s, a)  \right]
    & \geq
    \EE_{a\sim \bar{\pi}^* } \left[ Q^{\bar{\pi}} (s, a)  \right]
    -
    \tau \KL \left( \bar{\pi}^*(s) || \bar{\pi} ( s) \right)\\
    & \geq
    \EE_{a\sim \bar{\pi} } \left[ Q^{\bar{\pi}} (s, a)  \right]
    -
    \tau \KL \left( \bar{\pi}(s) || \bar{\pi} ( s) \right) \\
    & = \EE_{a\sim \bar{\pi} } \left[ Q^{\bar{\pi}} (s, a)  \right]\, ,
\end{align}
where the first inequality follows the non-negativity of KL divergence, the second inequality follows $\bar{\pi}^*$ is the optimal policy of \plaineqref{eq:md}. 
Then the first statement can be proved by applying a standard recursive argument. 
The second statement is directly implied by \cref{lem:nachum-softmax}. 
\begin{align}
\max_{\pi}   \EE_{a\sim \pi } \left[ Q^{\bar{\pi}} (s, a)  \right] - \tau \KL \left( \pi( s) || \bar{\pi} ( s) \right)\ \, ,
\label{eq:md}
\end{align}

\end{proof}

\newpage

\section{Detailed Experimental Settings}
In this section we provide the complete details of the experiments in our paper.
\subsection{The effect of different regularizations}
%


\begin{wraptable}{r}{6.0cm}
\caption{5\% BC Hyperparameters}
\label{5bc-Hyperparameter}
\centering
\scalebox{0.9
}{
\begin{tabular}{l|c} 
\toprule
Hyperparameter   & Value  \\
\midrule
Hidden layers & 3      \\
Hidden dim       & 256    \\
Activation function & ReLU \\
Mini-batch size       & 256    \\
Optimizer        & Adam   \\
Dropout          & 0.1    \\
Learning rate    & 3e-4  \\
\bottomrule
\end{tabular}
}
\end{wraptable}

We analyze the impact on the algorithm when different policies are used as regularization terms. The most straightforward way to obtain policies with different performances is the baseline method $X$\%BC mentioned in DT~\citep{chen2021decision}. We set $x$ to 5 in order to make the difference between policies more significant and let the 5\%BC policy be the policy used for constraint. In detail, we selectively choose different 5\% data for behavioral cloning so that we can get various policies with different performances. First, we sort the trajectories in the dataset by their return (accumulated rewards) and select three different levels of data: top (highest return), middle or bottom (lowest return). Each level of data is sampled at 5\% of the total data volume. Then we can train 5\%BC using the MLP network via BC and get three 5\%BC policies with different performances. We can then use each 5\%BC policy as the regularization term of TD3 to implement the TD3+5\%BC method. Besides, we also normalize the states and set the regularization parameter $\alpha$ to 2.5. The only difference between the TD3+5\%BC and TD3+BC is the action obtained in the regularization term. In TD3+BC, the action used for constraint is directly obtained from the dataset according to the corresponding state in the dataset, while in TD3+5\%BC, the action for constraint is sampled using 5\%BC. We set the training steps for 5\% BC to 5e5 and the training steps for TD3+5\%BC to 1M. Table~\ref{5bc-Hyperparameter} concludes the hyperparameters of 5\% BC. The hyperparameters of TD3+5\%BC are the same as those of TD3+BC~\citep{fujimoto2021minimalist}.

\subsection{Baselines}
We conduct experiments on the benchmark of D4RL and use Gym-MuJoCo datasets of version v2, Antmaze datasets of version v0, and Adroit datasets of version v1. We compare CPI with BC, DT~\citep{chen2021decision}, TD3+BC~\citep{fujimoto2021minimalist}, CQL~\citep{kumar2020conservative},  IQL~\citep{kostrikov2022offline}, POR~\citep{xu2022a}, EDAC~\citep{an2021uncertainty}, Diffusion-QL~\citep{wang2022diffusion} and InAC~\citep{xiaosample}. In Gym-Mujoco tasks, our experimental results are preferentially selected from EDAC, Diffusion-QL papers, or their original papers. If corresponding results are unavailable from these sources,  we rerun the code provided by the authors. Specifically, we run it on the expert dataset for  POR\footnote{\url{https://github.com/ryanxhr/POR}}. For DT\footnote{\url{https://github.com/kzl/decision-transformer}} and IQL\footnote{\url{https://github.com/ikostrikov/implicit_policy_improvement}}, we follow the hyperparameters given by the authors to run on random and expert datasets. For Diffusion-QL\footnote{\url{https://github.com/Zhendong-Wang/Diffusion-Policies-for-Offline-RL}}, we set the hyperparameters for the random dataset to be the same as on the medium-replay dataset and the hyperparameters for the expert dataset to be the same as on the medium-expert dataset according to the similarity between these datasets. For InAC\footnote{\url{https://github.com/hwang-ua/inac_pytorch}}, we run it on the random dataset. In Antmaze tasks, our experimental results are taken from the Diffusion-QL paper, except for EDAC\footnote{\url{https://github.com/snu-mllab/EDAC}} and POR. The results of EDAC are obtained by running the authors' provided code and setting the Q ensemble number to 10 and $\eta=1.0$. Even when we transformed the rewards according to \citep{kumar2020conservative}, the performance of EDAC on Antmaze still did not perform well, which matches the report in the offline reinforcement learning library CORL \citep{tarasov2022corl}. As the results in the POR paper are under Antmaze v2, we rerun them under Antmaze v0. In the Adroit task, we rerun the experiment for TD3+BC, DT (with return-to-go set to 3200), POR (with the same parameters as the antmaze tasks), and InAC(with tau set to 0.7).

\subsection{Conservative Policy iteration}

 

To implement our idea, we made slight modifications to TD3+BC\footnote{\url{https://github.com/sfujim/TD3_BC}} to obtain CPI. For CPI, we select a historical snapshot policy as the reference policy. Specifically, the policy snapshot $\pi^{k-2}$, which is two gradient steps before the current step, is chosen as the reference policy for the current learning policy $\pi^k$. CPI is trained similarly to TD3. 
For CPI-RE, the complete network contains two identical policy networks with different initial parameters, so that two policies with distinct performances can be obtained. The two policy networks in CPI-RE are updated via cross-update to fully utilize the information from both value networks. During training, the value network evaluates actions induced by the two policy networks, and only the higher value action is used to pull up the performance of the learning policy. During evaluation, the two policy networks are also used to select high-value actions to interact with the environment.
CPI only has one more actor compared to TD3+BC and therefore requires less computational overhead than other state-of-the-art offline RL algorithms with complex algorithmic architectures, such as EDAC (an ensemble of Q functions) and Diffusion-QL (Diffusion model). 
\begin{table*}[!t]
\centering
\caption{CPI Hyperparameters}
\scalebox{0.9}{
\begin{tabular}{c|ll}
\toprule
                              & Hyperparameter             & Value                        \\
\midrule
\multirow{6}{*}{Architecture} & Actor hidden layers        & 3                            \\
                              & Actor hidden dim           & 256                          \\
                              & Actor activation function  & ReLU                         \\
                              & Critic hidden layers       & 3                            \\
                              & Critic hidden dim          & 256                          \\
                              & Critic activation function & ReLU                         \\ 
\midrule
\multirow{14}{*}{Learning}    & Optimizer                  & Adam                         \\
                              & Critic learning rate       & 3e-4 for MuJoCo and Adroit   \\
                              &                            & 1e-3 for Antmaze             \\
                              & Actor learning rate        & 3e-4                         \\
                              & Mini-batch size            & 256                          \\
                              & Discount factor            & 0.99 for MuJoCo  and Adroit  \\
                              &                            & 0.995 for Antmaze            \\
                              & Target update rate         & 5e-3                         \\
                              & Policy noise               & 0.2                          \\
                              & Policy noise clipping      & (-0.5, 0.5)                  \\
                              & $\tau$                     & \{0.05, 0.2, 1, 2\} for MuJoCo \\
                              &                            & \{0.03, 0.05, 0.1, 1\} for Antmaze  \\
                              &                            & \{100, 200\} for Adroit      \\
                              & $\lambda$                  & \{0.5, 0.7\}                \\
\bottomrule
\end{tabular}
}
\label{ISPI-Hyperparameters}
\end{table*}

\begin{table}[!htbp]
\centering
\caption{Regularization parameter $\tau$ and weighting factor $\lambda$ of CPI for all datasets. On most datasets, CPI and CPI-RE have the same optimal parameters. On some datasets, the values in parentheses indicate the parameters of CPI, and the values outside the parentheses indicate the parameters of CPI-RE.}
\scalebox{0.9}{
\begin{tabular}{l|c|c}
\toprule
         Dataset             & regularization parameter $\tau$ & weighting coefficient $\lambda$ \\
\midrule
halfcheetah-random-v2        & 0.05                              & 0.7                             \\
halfcheetah-medium-v2        & 0.05                              & 0.7                             \\
halfcheetah-medium-replay-v2 & 0.05                              & 0.7                             \\
halfcheetah-medium-expert-v2 & 2                                 & 0.5                             \\
halfcheetah-expert-v2        & 1                                 & 0.5                             \\
hopper-random-v2             & 0.05                              & 0.5                             \\
hopper-medium-v2             & 0.2                               & 0.5                             \\
hopper-medium-replay-v2      & 0.05                              & 0.5                             \\
hopper-medium-expert-v2      & 1                                 & 0.5(0.7)                        \\
hopper-expert-v2             & 1                                 & 0.5(0.7)                        \\
walker2d-random-v2           & 2                                 & 0.5                             \\
walker2d-medium-v2           & 1                                 & 0.7                             \\
walker2d-medium-replay-v2    & 0.2                               & 0.7(0.5)                        \\
walker2d-medium-expert-v2    & 1                                 & 0.7                             \\
walker2d-expert-v2           & 1                                 & 0.7                             \\
\midrule
antmaze-umaze-v0             & 1                                 & 0.7                             \\
antmaze-umaze-diverse-v0     & 0.1                               & 0.5                             \\
antmaze-medium-play-v0       & 0.1                               & 0.5                             \\
antmaze-medium-diverse-v0    & 0.1(0.05)                         & 0.5                             \\
antmaze-large-play-v0        & 0.03                              & 0.5(0.7)                        \\
antmaze-large-diverse-v0     & 0.05                              & 0.5                             \\
\midrule
pen-human-v1                 & 100(200)                          & 0.5(0.7)                             \\
pen-cloned-v1                & 100                               & 0.7                             \\
\bottomrule
\end{tabular}
}
\label{alpha-lambda}
\end{table}

According to TD3+BC, we normalize the state of the MuJoCo tasks and use the original rewards in the dataset. For the Antmaze datasets, we ignore the state normalization techniques and transform the rewards in the dataset according to \citep{kumar2020conservative}. For Adroit datasets, we also do not use state normalization and standardize the rewards according to Diffusion-QL~\citep{wang2022diffusion}. To get the reported results, we average the returns of 10 trajectories with five random seeds evaluated every 5e3 steps for MuJoCo and Adroit, 100 trajectories with five random seeds evaluated every 5e4 steps for Antmaze.
In addition, we evaluate the runtime and the memory consumption of different algorithms to train an epoch (1000 gradient update steps). All experiments are run on a GeForce GTX 2080TI GPU.

The most critical hyperparameters in CPI are the weight coefficient $\lambda$ and the regularization parameter $\tau$. On all datasets, the choice of $\lambda=0.5$ or $\lambda=0.7$ is the most appropriate so that the two actions (from the behavioral policy $\beta$ and the reference policy $\bar \pi$) can be well-weighed to participate in the learning process. As mentioned in the main text, the choice of $\tau$ depends heavily on the characteristics of the dataset. For a high-quality dataset, $\tau$ should be larger to learn in a close imitation way, and for a high-diversity dataset, the $\tau$ should be chosen to be smaller to make the whole learning process more similar to RL. We find that training is more stable and better performance can be achieved on Antmaze when the critic learning rate is set to 1e-3. Also, since Antmaze is a sparse reward domain, we also set the discount factor to 0.995. ~\cref{alpha-lambda} gives our selections  of hyperparameters $\tau$ and $\lambda$ on different datasets. Other settings are given in Table \ref{ISPI-Hyperparameters}.

\section{CPI with forward KL}

We show how to optimize \plaineqref{eq:md} using the \emph{forward KL}. Extension to \plaineqref{eq:seq-cpi} is straightforward by picking $\bar{\pi} = \pi_t$. 

Recall \plaineqref{eq:md}, 
\begin{align}
\max_{\pi}   \EE_{a\sim \pi } \left[ Q^{\bar{\pi}} (s, a)  \right] - \tau \KL \left( \pi( s) || \bar{\pi} ( s) \right)\ \, . 
\end{align}
By \cref{lem:nachum-softmax}, the optimal policy $\bar{\pi}^*$ has a closed form solution. 
\begin{align}
    \bar{\pi}^*(a|s) \propto \bar{\pi}(a|s) \exp\left( \frac{Q^{\bar{\pi}} (s, a)}{ \tau} \right) \, .
\end{align}
This implies that to optimize \plaineqref{eq:md}, we can also consider the following   
\begin{align}
    \min_{\pi} \KL(\bar{\pi}^* || \pi) &\propto - \bar{\pi}^* \log \pi
    \\
    & = \EE_{a\sim \bar{\pi}^*} [\log \pi(a | s)]
    \\
    & = \EE_{a\sim \bar{\pi}} \left[ \frac{\bar{\pi}^* (a|s) }{\bar{\pi}(a|s)} \log \pi(a | s) \right]
    \\
    & = \EE_{a\sim \bar{\pi}} \left[\frac{\exp(Q^{\bar{\pi}} (s, a)  / \tau )}{Z(s)} \log \pi(a | s) \right] \, .
\end{align}
The first step follows by removing terms  not dependent on $\pi$. $Z(s) =  \sum_{a} \bar{\pi}(a | s) \exp(Q^{\bar{\pi}} (s, a) $ is a normalization term. In practice, it is often approximated by a state value function \citep{xiaosample,kostrikov2022offline,nair2020awac}.

\paragraph{Connection to \plaineqref{eq:md}}
We now discuss how to connect the forward KL objective described above with the original optimization problem \plaineqref{eq:md}. Indeed, it can be verified that \plaineqref{eq:md} corresponds to a \emph{reverse KL} policy optimization,
\begin{align}
\argmin_{\pi} \KL(\pi || \bar{\pi}^*) 
& = \argmin_{\pi} \pi \log \pi - \pi \log \bar{\pi}^*
\\
& = \argmin_{\pi} \pi \log \pi - \pi \log \frac{\bar{\pi} \exp\left( {Q^{\bar{\pi}} } / { \tau} \right)}{  Z}
\\
& =  \argmin_{\pi} \pi \log \pi  - \pi \log \bar{\pi} - \pi Q^{\bar{\pi}} / \tau + \log Z
\\
& = \argmin_{\pi} \KL(\pi || \bar{\pi}) - \pi  Q^{\bar{\pi}} / \tau 
\\
& = \argmax_{\pi}   \pi  Q^{\bar{\pi}} - \tau\KL(\pi || \bar{\pi}) \, .
\end{align}
where we use vector notations for simplicity. Although forward and reverse KL has exactly the same solution in the tabular case, when function approximation is applied these two objectives can showcase different optimization properties. We refer to \citet{chan2022greedification} for more discussions on these two objectives.

\section{Comparison of TD3+BC and CPI}

According to the experimental part of the ablation study and the analysis in the~\citep{wu2022supported}, $\alpha$ is an important parameter for controlling the constraint strength. TD3+BC is set $\alpha$ to a constant value of 2.5 for each dataset, whereas CPI chooses the appropriate $\tau$ from a set of $\tau$ alternatives. We note that the hyperparameter that plays a role in regulating Q and regularization in CPI is $\tau$, which can essentially be understood as the reciprocal of $\alpha$ in TD3+BC. Therefore, for the convenience of comparison, we rationalize the reciprocal of $\tau$ as the parameter $\alpha$. In this section, we set the $\alpha$ of TD3+BC to be consistent with that of CPI in order to show that the performance improvement of CPI mainly comes from amalgamating the benefits of both behavior-regularized and in-sample algorithms. Further, we also compare CPI with TD3+BC with dynamically changed $\alpha$ \citep{beeson2022improving}, which improves TD3+BC by a large margin, to show the superiority of CPI. The selection of parameters is shown in Table ~\ref{alpha-lambda}.

The results for TD3+BC (vanilla), TD3+BC (same $\alpha$ with CPI), TD3+BC with dynamically changed $\alpha$ and CPI are shown in Table~\ref{tab:same_alpha_results}. Comparing the variants of TD3+BC with different $\alpha$ choices, it can be found that changing $\alpha$ can indeed improve the performance of TD3+BC. However, compared with TD3+BC (same $\alpha$), the performance of CPI is significantly better, which proves the effectiveness of the mechanism for iterative refinement of policy for behavior regularization in CPI.

\begin{table*}[!htbp]
\centering
\caption{Comparison with TD3+BC and its variants.}
\scalebox{0.9}{
\begin{tabular}{l|ccc|cc}
\toprule
Dataset &
  \multicolumn{1}{c}{\begin{tabular}[c]{@{}c@{}}TD3+BC\\ (vanilla)\end{tabular}} &
  \multicolumn{1}{c}{\begin{tabular}[c]{@{}c@{}}TD3+BC\\ (same $\alpha$ with CPI)\end{tabular}} &
  \multicolumn{1}{c}{\begin{tabular}[c]{@{}c@{}}TD3+BC\\ (dynamic $\alpha$)\end{tabular}} &
  \multicolumn{1}{c}{\textbf{CPI}} & 
  \multicolumn{1}{c}{\textbf{CPI-RE}} \\ 
\midrule
halfcheetah-medium        & 48.3   & 58.8    &  55.3   & 64.4   & 65.9  \\
hopper-medium             & 59.3   & 69.0    &  100.1  & 98.5   & 97.9  \\
waker2d-medium            & 83.7   & 79.8    &  89.1   & 85.8   & 86.3  \\
halfcheetah-medium-replay & 44.6   & 51.2    &  48.7   & 54.6   & 55.9  \\
hopper-medium-replay      & 60.9   & 88.5    &  100.5  & 101.7  & 103.2 \\
waker2d-medium-replay     & 81.8   & 83.0    &  87.9   & 91.8   & 93.8  \\
halfcheetah-medium-expert & 90.7   & 92.3    &  91.9   & 94.7   & 95.6  \\
hopper-medium-expert      & 98.0   & 76.1    &  103.9  & 106.4  & 110.1 \\
waker2d-medium-expert     & 110.1  & 109.4   &  112.7  & 110.9  & 111.2 \\
halfcheetah-expert        & 96.7   & 94.5    &  97.5   & 96.5   & 97.4  \\
hopper-expert             & 107.8  & 110.8   &  112.4  & 112.2  & 112.3 \\
walker2d-expert           & 110.2  & 109.3   &  113.0  & 110.6  & 111.2 \\ 
\midrule  
Above Total               & 992.1  & 1022.7  &  1113.0 & 1128.1 & 1140.8 \\ 
\midrule
halfcheetah-random        & 11.0   & 23.2    &   -     & 29.7   & 30.7  \\
hopper-random             & 8.5    & 28.8    &   -     & 29.5   & 30.4  \\
waker2d-random            & 1.6    & 4.4     &   -     & 5.9    & 5.5   \\
\midrule 
Gym Total                 & 1013.2 & 1079.1  &   -     & 1193.3 & 1207.4 \\ 
\midrule 
antmaze-umaze             & 78.6   & 93.8    &   -    & 98.8   & 99.2  \\
antmaze-umaze-diverse     & 71.4   & 73.2    &   -    & 88.6   & 92.6  \\
antmaze-medium-play       & 3.0    & 13.0    &   -    & 82.4   & 84.8  \\
antmaze-medium-diverse    & 10.6   & 8.0     &   -    & 80.4   & 80.6 \\
antmaze-large-play        & 0.0    & 0.0     &   -    & 20.6   & 33.6  \\
antmaze-large-diverse     & 0.2    & 0.0     &   -    & 45.2   & 48.0 \\ 
\midrule 
Antmaze Total             & 163.8  & 188.0   &   -    & 416.0  & 438.8   \\ 
\bottomrule 
\end{tabular}
}
\label{tab:same_alpha_results}
\end{table*}

\newpage
\section{More experimental results}

\subsection{Effect of  different numbers of reference policies}

We also present the effect of using different numbers of actors in CPI-RE. We can conclude from Figure~\ref{fig:actor-num} that increasing the actor number from 1 to 2 (i.e., introducing the reference policy) can significantly improve the performance of the learning policy. As further introducing reference policy derived from more actors does not bring significant benefits and entails significant resource consumption. Thus, we set the actor number in our method to two.


\begin{figure}[!htbp]
\centering
\includegraphics[scale=0.7]{./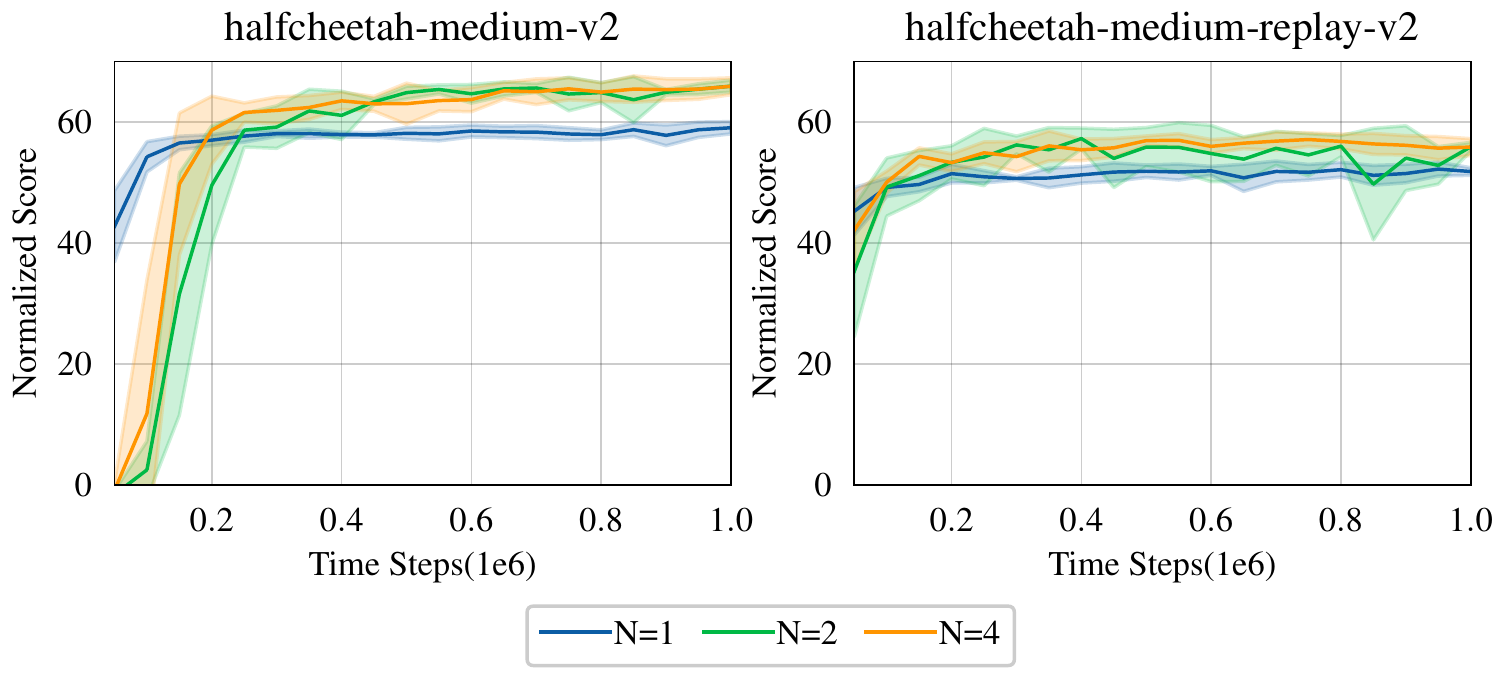}
\caption{Effect of different actor number}
\label{fig:actor-num}
\end{figure}

\subsection{Learning curves of CPI}
The learning curve of CPI on MuJoCo tasks is shown in Figure \ref{fig:mujoco-performance}.
\begin{figure}[!htbp]
\centering
\includegraphics[scale=0.55]{./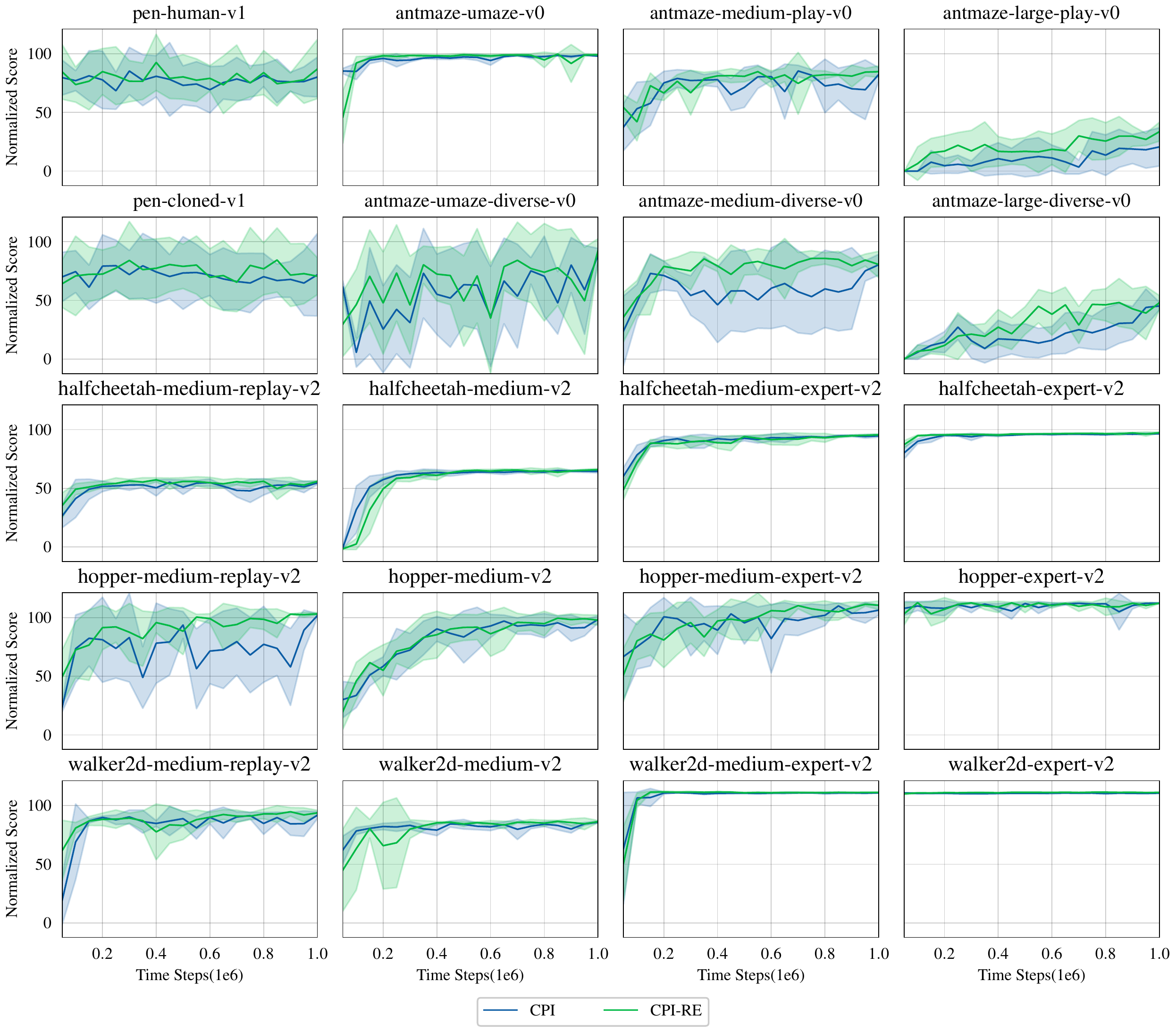}
\caption{Learning curves of CPI and CPI-RE for Mujoco, Antmaze and Pen, evaluated every 5e3 steps. }
\label{fig:mujoco-performance}
\end{figure}%

\end{document}